\newif\ifreport
\reporttrue

\ifreport
\documentclass[conference]{article}
\usepackage[margin=1in]{geometry}
\usepackage{authblk, hyperref}
\else
\documentclass[conference]{IEEEtran}
\fi
\usepackage{cite}
\usepackage{amsmath,amssymb,amsfonts, enumitem}
\usepackage{graphicx}
\usepackage{textcomp}
\usepackage{xcolor}
\usepackage[normalem]{ulem}
\usepackage{amsthm,algorithm,algorithmicx,algpseudocode,subfig}
\def\BibTeX{{\rm B\kern-.05em{\sc i\kern-.025em b}\kern-.08em
		T\kern-.1667em\lower.7ex\hbox{E}\kern-.125emX}}
\usepackage{tikz, pgfplots, tabularx, subfig}
\usepackage[thinc]{esdiff}
\usetikzlibrary{shapes, arrows, scopes, chains, pgfplots.groupplots,intersections,calc, patterns,arrows.meta}
\pgfplotsset{compat=1.16}
\usepgfplotslibrary{fillbetween}


\DeclareMathOperator*{\ucb}{UCB}
\DeclareMathOperator*{\lcb}{LCB}
\newtheorem{theorem}{Theorem}

\newtheorem{lemma}[theorem]{Lemma}

\newtheorem{remark}{Remark}
\newcommand{\prob}{\mathbb{P}}
\newcommand{\mut}{\tilde{\mu}}
\newcommand{\muh}{\hat{\mu}}
\newcommand\numberthis{\addtocounter{equation}{1}\tag{\theequation}}
\newcommand{\enc}{\mathsf{enc}}
\newcommand{\dec}{\mathsf{dec}}
\newcommand{\remove}[1]{}


\begin{document}

    \title{ICQ: A Quantization Scheme for Best-Arm Identification Over Bit-Constrained Channels}

\ifreport
\author[1]{Fathima Zarin Faizal}
\author[2]{Adway Girish}
\author[3]{Manjesh Kumar Hanawal}
\author[1]{Nikhil Karamchandani}
\affil[1]{Department of Electrical Engineering, IIT Bombay}
\affil[2]{School of Computer and Communication Sciences, EPFL}
\affil[3]{Industrial Engineering and Operations Research, IIT Bombay}

\else
\author{\IEEEauthorblockN{Fathima Zarin Faizal}
	\IEEEauthorblockA{\textit{EE Department} \\
		\textit{IIT Bombay}, India \\
		fathima@ee.iitb.ac.in}
	\and
	\IEEEauthorblockN{Adway Girish}
	\IEEEauthorblockA{\textit{IC School}\\
		\textit{EPFL}, Switzerland \\
		adway.girish@epfl.ch}
	\and
	\IEEEauthorblockN{Manjesh Kumar Hanawal}
	\IEEEauthorblockA{\textit{IEOR Department}\\
		\textit{IIT Bombay}, India \\
		mhanawal@iitb.ac.in}
	\and
	\IEEEauthorblockN{Nikhil Karamchandani}
	\IEEEauthorblockA{\textit{EE Department} \\
		\textit{IIT Bombay}, India \\
		nikhilk@ee.iitb.ac.in}
}	
\fi
\date{April 25, 2023}

	\maketitle
	
	\begin{abstract}
    We study the problem of best-arm identification in a distributed variant of the multi-armed bandit setting, with a central learner and multiple agents. Each agent is associated with an arm of the bandit, generating stochastic rewards following an unknown distribution. Further, each agent can communicate the observed rewards with the learner over a bit-constrained channel. We propose a novel quantization scheme called Inflating Confidence for Quantization (ICQ) that can be applied to existing confidence-bound based learning algorithms such as Successive Elimination. We analyze the performance of ICQ applied to Successive Elimination and show that the overall algorithm, named ICQ-SE, has the order-optimal sample complexity as that of the (unquantized) SE algorithm. Moreover, it requires only an exponentially sparse frequency of communication between the learner and the agents, thus requiring considerably fewer bits than existing quantization schemes to successfully identify the best arm. We validate the performance improvement offered by ICQ with other quantization methods through numerical experiments.
	\end{abstract}
	\ifreport
    \tableofcontents \newpage
    \else
	\begin{IEEEkeywords}
		multi-armed bandits, best-arm identification, pure exploration, quantization 
	\end{IEEEkeywords}
	\fi
	\tikzstyle{round} = [circle, draw, very thick, text centered, inner sep=1.5pt, text width=4em]
	\tikzstyle{block} = [rectangle, draw, very thick,
	text width=5em, text centered, rounded corners, minimum height=3em]
	\tikzstyle{line} = [draw, -latex']
	\section{Introduction}
The \emph{multi-armed bandit} (MAB) problem is a sequential decision-making model involving a learner and an environment, where in each round, the learner chooses from $K$ actions or arms. Each arm is associated with a reward distribution that is \textit{a priori} unknown to the learner. The learner then receives a random reward drawn from the distribution of the chosen arm. We consider the \emph{pure exploration} variant \cite{pac_2002} of this problem, where the learner is required to identify the best arm, i.e., the arm with the highest mean reward, with accuracy better than a prescribed confidence level. The learner is evaluated based on the number of samples (\emph{sample complexity}) it requires to identify the best arm. Thus a `desirable' algorithm is one that identifies the best arm with a smaller sample complexity, for a given confidence level. The pure exploration setting is well-studied \cite{audibert2010best, jamieson2014survey, bubeck2011pure} under the assumption that the learner gets to observe the reward values with full precision. 

We consider a distributed variant of the pure exploration MAB setup where the learner cannot observe the reward samples directly, but through intermediate agents that act as an interface for each arm. Unlike the traditional pure exploration MAB setup, the learner no longer has access to the rewards with full precision, i.e., the agents observe the rewards obtained and communicate aggregated information to the learner over noiseless, bit-constrained channels. A key point is that each agent `represents' a single arm, i.e., each agent pulls and observes rewards from only one fixed arm associated with it. 

\textbf{Motivation.} Such distributed learning setups with limited communication arise in many real-world systems. For example, in wireless networks with bandwidth constraints involving remote and low-complexity agents, the cost of communicating the rewards could become a performance bottleneck. Reducing the number of bits transmitted would result in lower power consumption and wireless interference. This is particularly significant in IoT networks where devices are typically resource-constrained and battery-powered \cite{dames2017detecting, savic2014belief, song2018recommender, ding2019interactive}. It may then be easier to have the agents process/compress the observations locally, before passing on information in a more condensed format. Finally, 
when learning from privacy-sensitive data, quantization can be used to hide the exact rewards obtained in such a way that the specific contents remain unintelligible to the learner, but there is still enough information to carry out the overall learning task, as in \cite{gandikota2020vqsgd}.

\textbf{Contributions.} To overcome constraints on the precision with which information can be sent from the agents to the learner, we develop a quantization scheme 
called \emph{Inflating Confidence for Quantization} (ICQ) for the best-arm identification problem. Our quantization scheme allows agents to communicate the reward information with fewer bits while still allowing the learner to extract enough information to identify the best arm. The key idea behind the scheme is to generate a high-probability range for the mean reward estimator at the agent that is smaller than the actual range of the rewards to reduce the range over which quantization needs to be done. This is done using appropriately defined confidence intervals for the quantized values.

While we build our scheme on top of the successive elimination framework proposed for the standard best-arm identification problem \cite{pac_2002}, and develop an algorithm called ICQ-SE, a key feature of our proposed quantization strategy is that it can be used in conjunction with a broad class of alternate schemes (such as LUCB \cite{kalyanakrishnan2012lucb} and lil'UCB \cite{jamieson2014lil}, to obtain corresponding algorithms ICQ-LUCB and ICQ-lil'UCB, and so on). This `universality' is clearly desirable and draws inspiration from \cite{hanna21} where the proposed quantization strategy had a similar feature with relation to a broad class of regret-minimization algorithms. Our algorithm ICQ-SE has the following features (shown in Section \ref{sec: analysis}):
\begin{enumerate}[label=\arabic*.]
    \item the learner needs to communicate with the agents only exponentially sparsely;
    \item requires only $B \geq 1$ number of bits for each round of communication for bounded rewards; 
    \item ensures order-optimal sample complexity compared to the distributed setup with no bit constraints; and
    \item can be easily modified to be used with other confidence bound based algorithms (see Remark \ref{rem: general-algo}).
\end{enumerate}
In Section \ref{sec: sims}, through simulations, we show that this scheme performs better than other quantization schemes in the MAB literature, for both bounded and unbounded rewards. 
	\section{Related work}
MAB problems are well explored in the literature in various settings like expected regret minimization, simple regret minimization, and Best Arm Identification (BAI) \cite{lattimore_bandit_2020}. For BAI problems \cite{audibert2010best, jamieson2014survey, bubeck2011pure, pac_2002, kalyanakrishnan2012lucb, jamieson2014lil, garivier2016track}, the goal is to identify the best arm either with high confidence within a given budget (\emph{fixed budget}) or with fewer samples for a given threshold on the probability of making a mistake (\emph{fixed confidence}). The algorithms developed for these settings assume that the learner has access to samples from the reward distributions with full precision. However, this assumption need not hold in federated setups \cite{AAA2021_FederatedMAB_ShiShen} where the learners and the agents need to exchange information, and any bottleneck in communication needs to be taken into account. Some recent works have addressed such issues by modeling communication bottlenecks as capacity constraints \cite{hanna21,mitra22, mitra_heterogeneity} or as a limited resource that comes at an additional cost \cite{srinivas_federated}.

Our work is closer to \cite{hanna21,mitra22, mitra_heterogeneity}, which propose quantization methods to improve the performance of learning algorithms under channel capacity constraints. 
In \cite{hanna21}, the authors propose a quantization scheme, named QuBan, that can be used over a large class of MAB algorithms in the regret minimization setting to achieve order-optimal regret. QuBan differs from our method as it does not make use of confidence bounds for the mean reward estimators. 
In \cite{mitra22}, the authors propose an adaptive quantization scheme and a decision-making policy for the Linear Stochastic Bandit setting and show that $B=\mathcal{O}(d)$ bits guarantees order-optimal regret, where $d$ is the dimension of the arm set. Their scheme is the closest to ours, where confidence bounds for the mean reward estimators are used to find a smaller range to quantize on, albeit for the regret minimization setting. Moreover, similar to us, they show that using 1 bit for each transmission from the agent to the learner is sufficient to ensure order-optimal regret compared to the unquantized setting when the reward distributions have bounded support.

In \cite{mitra_heterogeneity}, a pure exploration setting is considered, where a learner and multiple clients identify the best arm together, with each client being allotted a disjoint subset of the arms. Like us, they also propose a quantization scheme for communicating rewards between the clients and the learner; the difference being that their scheme only works with bounded rewards. Also, it is hard to control the number of bits used by their algorithm whereas our proposed scheme ICQ-SE provably allows for a trade-off between the amount of communication and  performance. See Section \ref{sec: sims} for a more detailed comparison.
	\section{Problem setup} \label{sec: problem_setup}
In this section, we define notation that will be used throughout the paper and formalize our system model.  The overall setup has been illustrated in Figure \ref{fig: setup}. Broadly, we study a distributed multi-armed bandit (MAB) problem where the decision-making and observing entities are separated. They must communicate their `results' (decisions or observations) to each other over a noiseless channel using a finite number of bits,  with the overall goal of performing a learning task. 
\begin{figure}[!htbp]
    \centering
		\begin{tikzpicture}[scale=0.75]
		\node [block, label={[align=center]above:Decode $\mathbf{s}_{1,i}, \mathbf{s}_{3,i}$\\ and broadcast $c_{i+1}$}]  at (0,3) (learner) {Learner};
		
		\node [round, label={[align=center]below: encode information\\into  $\mathbf{s}_{1,i}$}] (2)  at (-4,0) (agent1) {Agent 1} ;
		\path [line] (learner.west) -- node[sloped, anchor=center, above] {\footnotesize action $c_i$}(agent1.north);
		\path [line] (agent1) -- node[sloped, anchor=center, below, yshift=0pt, align=center] {\footnotesize finite-bit \\\footnotesize symbol $\mathbf{s}_{1,i}$}([xshift=0.3cm]learner.south west);
		
		\node [round, label={[align=center]below:Inactive in\\ comm. round $i$}] (2)  at (0,-1) (agent2) {Agent 2} ;
		\path [line] (learner.south) -- node[sloped, anchor=center, above, align=center] {\footnotesize action $c_i$}(agent2.north);

		\node [round, label={[align=center]below: encode information\\into  $\mathbf{s}_{3,i}$}] (2)  at (4,0) (agent3) {Agent 3} ;
		\path [line] (learner.east) -- node[sloped, anchor=center, above] {\footnotesize action $c_i$}(agent3.north);
		\path [line] (agent3) -- node[sloped, anchor=center, below, yshift=0pt, align=center] {\footnotesize finite-bit \\\footnotesize symbol $\mathbf{s}_{3,i}$}([xshift=-0.3cm]learner.south east);
		
	\end{tikzpicture}
	
	\caption{Block diagram illustrating the overall setup, shown here for the case with 3 agents, i.e., $K=3$.}
	\label{fig: setup}
\end{figure}
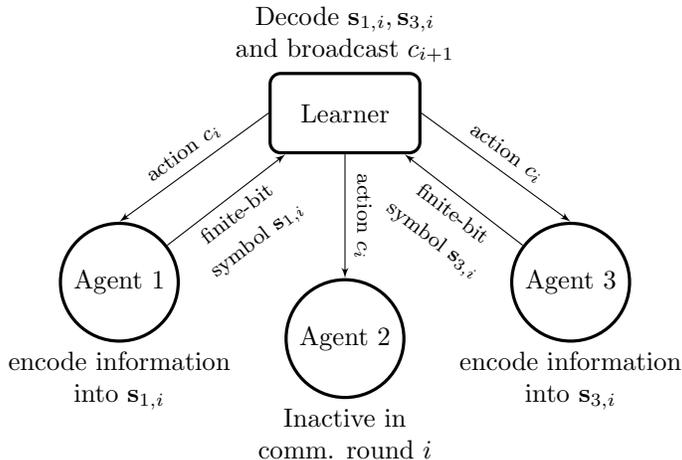 

\textbf{The distributed MAB.} There is a central learner, connected to each of the $K$ distributed agents, via a noiseless channel that is bit-constrained from the agent to the learner. The learner and the agents work together to solve a fixed confidence stochastic MAB problem consisting of $K$ arms. Agent $i \in \{1,\ldots,K\} \triangleq [K]$ has access to arm $i$ of the MAB, which is associated with a reward distribution $\nu_i$. We assume that the distributions $\{\nu_i\}_{i=1}^{K}$ are $\sigma^2$-subgaussian\footnote{A random variable $X$ is said to be $\sigma^2$-subgaussian if for any $t > 0$, $\mathbb{P} \left(|X - \mathbb{E}[X]|>t\right) \leq 2\exp \left(-t^2/2\sigma^2 \right)$} and bounded on $[a,b]$. For $i \in [K]$, let $\mu_i$ denote the mean of the distribution $\nu_i$ and $r_{i,t}$ denote the $t^{\text{th}}$ reward sample drawn from $\nu_i$. For each arm $i$, $\{r_{i,t}: t \ge 1\}$ is an i.i.d. process; furthermore the reward samples are independent across different arms. We also assume that arm 1 is the best arm for notational convenience, i.e., $\mu_1 \geq \mu_i$ where $i \neq 1$. Also, define the suboptimality gaps $\Delta_i = \mu_1 - \mu_i$ for $i \in [K]$. 

The broad objective for the learner is to identify the arm with the highest mean reward by sequentially selecting arms and sampling from their associated reward distributions. 

\textbf{The communication model.} Our communication model is summarized in Figure \ref{fig: setup}. Communication between the learner and the agents happens in rounds. At the beginning of \textit{(communication) round} $i$, based on the information the learner has seen till $i-1$, it broadcasts an action $c_i$ to all the $K$ agents, where $c_i$ encodes the information about the actions to be taken by each agent at round $i$. The agents respond to the learner after a fixed synchronized duration and the learner updates its estimate of the best arm based on the information it has received from the agents at round $i$. This constitutes one round. Each agent communicates at most once in each round.

We assume that each agent is only capable of collecting samples from its associated arm reward distribution, aggregating the information from the samples it has seen so far and transmitting it to the central learner. The agents cannot share information between each other and can communicate only with the central learner. This is commonly the case with low-complexity and resource-constrained devices such as drones and sensors. Moreover, they are only allowed to use a finite number of bits for each transmission to the learner. We assume that these transmissions are completed without any errors or erasures. Note that we do not assume that communication from the learner to the agents is bit-constrained and also do not put any computational restrictions on it; this is true for several application settings and  similar assumptions are also common in the literature \cite{mitra_heterogeneity,mitra22,hanna21}.  

\textbf{Performance metrics.} We consider the \emph{fixed confidence} BAI problem for the distributed MAB setting described above. The goal here is to find \emph{sound} strategies that find the optimal arm in a finite number of rounds for a given confidence level $\delta$. Formally, if the strategy stops after using $\tau_\delta$ samples and outputs arm $J_{\tau_\delta}$, a sound strategy ensures that $\mathbb{P}(\tau_\delta< \infty, J_{\tau_{\delta}} \neq 1) \leq \delta$. Since we are dealing with a communication-constrained setting, we also use the metrics of \emph{(communication) round complexity} $\tau_{r,\delta}$ (the number of communication rounds needed) and \emph{communication complexity} $ B_{\delta} $ (the total number of bits used by the algorithm) to study the performance of any sound strategy.

An online learning algorithm for this distributed MAB setup has the following components: (1) a \textit{sampling rule} at the learner that at each time considers the history of communicated messages received from the agents thus far, and prescribes the arm(s) to be pulled in the next round;  (2) a \textit{communication rule} at each agent that prescribes the rounds at which the agent will communicate with the learner and also specifies the content of the message; (3) a \textit{stopping rule} at the learner that specifies when the learner will stop sampling arms any further; and (4) a \textit{recommendation rule} at the learner that specifies its estimate for the best arm after the algorithm terminates.

\textbf{Objective.}
We develop learning algorithms for the distributed MAB setting outlined above where the agents need to limit the amount of communication with the learner. This is done via restricting both the frequency of communication (in terms of rounds) as well as the sizes of the messages exchanged (in terms of bits). Using too little communication can of course lead to a large penalty in terms of the sample complexity, and therein lies the main technical challenge of designing communication and quantization strategies which can effectively navigate this trade-off.

In the following sections, we propose a class of policies parameterized by the frequency of communication and the number of bits used in each message, and then explicitly characterize the impact of these parameters on the sample complexity. 

	\section{Proposed quantization scheme and algorithm} \label{sec: algo} 
This section is devoted to the description of our proposed quantization scheme \emph{Inflating Confidence for Quantization} (ICQ) and its application to Successive Elimination as ICQ-SE. Algorithm \ref{algo: learner_side} describes the actions to be taken by the learner while Algorithm \ref{algo: agent_side} describes the agent operation (definitions of the additional notation involved can be found in \eqref{eqn: u_prime}, \eqref{eqn: U_breakup} and \eqref{eqn: lcb_ucb}). We provide more details below. 
\begin{algorithm}[!htbp] 
	\caption{\textsc{ICQ-SE} algorithm (learner-side)}
	\label{algo: learner_side}
	\begin{algorithmic}[1]
		\Procedure{\textsc{ICQ-SE-learner}}{$K,\delta, B, \{b_i\}$}
		\State Let $S\leftarrow \{1,\ldots,K\}$
		\State For $1 \leq j \leq K$, let $\mut_{j,0}$ be sampled uniformly from $[a,b]$
		\For{$1 \leq i < \infty$}
		\For{$j \in S$}
		\State Instruct agent $j$ to sample $b_i$ times
		\State Receive quantized value $\mathbf{s}_{j,i}$ from agent $j$ 
		\State $L_{i,j} \leftarrow [\lcb(j,i-1,\delta)-U^\prime(i,\delta),\ \ucb(j,i-1,\delta)+U^\prime(i,\delta)]$
		\State Decode $\tilde{\mu}_{j,i} = \dec(\mathbf{s}_{j,i}, B, L_{i,j})$
		\EndFor
		\State $S \leftarrow S \setminus \{m \in S : \underset{j \in [K]}{\max}\lcb(j,i,\delta) \geq \ucb(m,i,\delta) \}$
		\State STOP if $|S|=1$
		\EndFor
		\State \textbf{return} only element in $S$
		\EndProcedure
	\end{algorithmic}
\end{algorithm}

\begin{algorithm}[!htbp] 
	\caption{\textsc{ICQ-SE} algorithm (agent-side)}
	\label{algo: agent_side}
	\begin{algorithmic}[1]
		\Procedure{\textsc{ICQ-SE-agent}$_j$}{$\delta, B,i,\mut_{j,i-1}$}
		\State Pull arm $j$ $b_i$ times
		\State $L_{i,j} \leftarrow [\lcb(j,i-1,\delta)-U^\prime(i,\delta),\ \ucb(j,i-1,\delta)+U^\prime(i,\delta)]$
		\State Send $\mathbf{s}_{j,i} = \enc(\hat{\mu}_{j,i}, B, L_{i,j})$
		\State \textbf{return} quantized value $\mathbf{s}_{j,i}$
		\EndProcedure
	\end{algorithmic}
\end{algorithm}

\textbf{Successive Elimination.} The broad idea behind the \emph{Successive Elimination} (SE) framework \cite{pac_2002} for a classical MAB setting (where there is only a learner observing full-precision rewards and no intermediate agents) is to characterize high-confidence bounds for the means of the distributions of each arm. One derives confidence widths $U'(i, \delta)$ such that the empirical mean $\muh_{j,i}$ of arm $j$ at round $i$ lies in the interval $[\hat{\mu}_{j,i} - U'(i,\delta),\hat{\mu}_{j,i} + U'(i,\delta)]$ around the actual mean $\mu_j$ with a `high' probability (we make this formal later). The upper limit is called the Upper Confidence Bound (UCB) and the lower limit is called the Lower Confidence Bound (LCB). The learner constantly keeps track of a set $S$ of \emph{active arms}, i.e., the set of arms still in contention to be the best arm. The set $S$ is initialized to be the set of all arms $[K]$. At the end of a round, if the UCB of any arm $k$ lies below the LCB of any other arm $j$, then arm $k$ is removed from the active set. Thus, under the high probability event that these confidence bounds contain the actual reward means, removing arms whose UCBs lie below the LCB of some other arm would guarantee that the algorithm is removing only suboptimal arms. The algorithm makes a mistake only when these high probability events do not occur. 

\textbf{High-level description of the algorithm.} In our setting, the learner no longer has access to full-precision rewards, and instead receives quantized estimates from the agents associated with each arm. In line with the SE framework, we also refer to the agents corresponding to the active arms as \emph{active agents}. As we would like to reduce the number of bits used, it is inefficient for the agent to communicate each sample that it sees. We thus consider a batched approach that ensures that communication happens in a sparse manner. The learner pulls active arms (through the agents) in batches; in particular, during (communication) round $i$, the agents pull and observe rewards from their associated arms $b_i$ times before sending (a summary of) the results to the learner. We also define $t_i$ to be the cumulative sum of arm pulls for each active arm till round $i$, i.e., $t_i = \sum_{j=1}^i b_j$. We show our results for an \emph{exponentially-sparse} communication framework similar to \cite{srinivas_federated}, i.e., $t_i = \alpha^i$ for some $\alpha > 1$.

At the beginning of round $i$, the learner instructs each agent in $S$, the set of active agents, to sample from their associated reward distributions $b_i$ times. At the end of round $i$, each active agent must have made a total of $t_i$ cumulative arm pulls and sends to the learner a quantized estimate of the empirical mean of the rewards obtained. The learner first decodes the quantized estimate, then decides which arms will remain in the active set using the SE framework. This update requires defining new confidence intervals that account for quantization; details will be provided later. This marks the end of a communication round. The algorithm terminates when there is only one arm left in the active set, which is the recommended arm. Before describing the working of the algorithm in more detail, it is instructive to look at the quantization part separately.

\textbf{Quantization scheme.}
Each agent calculates the empirical mean of the observed rewards, which must first be quantized and encoded into a bit string to be sent over the bit-constrained channel. Similarly, at the learner, we must be able to obtain a decoded estimate of the empirical mean from the encoded bit string. This is achieved as follows. We first fix an interval that we `expect' the empirical mean to belong to with high probability (this will become clear later), divide it into $2^B$ equal bins, then transmit a bit string that will be decoded at the learner as the midpoint of the bin.
We formalize this below.

Let $[\alpha,\beta]$ be the `expected' real interval as described above. First, divide $[\alpha,\beta]$ into $2^B$ bins of equal width $\frac{\beta-\alpha}{2^B}$, and associate with the midpoint of each bin, a $B$-length bit string. Then, the encoder $\enc(x, B, [\alpha,\beta])$ returns the bit string $\mathbf{s}$ associated with its nearest bin midpoint (even if $x \notin [\alpha, \beta]$), and the decoder $\dec(\mathbf{s}, B, [\alpha,\beta])$ returns the midpoint (a real number in $[\alpha, \beta]$) corresponding to the bit string $\mathbf{s}$. We may simply refer to them as $\enc(x)$ and $\dec(\mathbf{s})$ when $B$ and the interval are clear from context. The quantization scheme is summarized in Figure \ref{fig: quant_scheme}. Note that this quantization scheme uses $B$ bits for each transmission. Another point to note is that if $x \in [\alpha,\beta]$, the quantization error between $x$ and the decoded quantized value $\dec(\enc(x))$ is at most $\frac{\beta-\alpha}{2\cdot2^B}$, i.e., if $x$ is known to lie within the interval, then the quantization error is at most a factor of $\frac{1}{2^{B+1}}$ times the width of the interval.
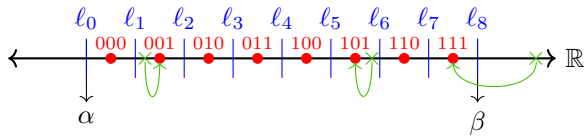
\begin{figure}[!htbp]
	\centering
	\begin{tikzpicture}[scale=1.3]
	\draw[thick,<->] (-2.8,0) node[anchor= north east, align=right] {} -- (2.8,0) node[anchor= west]{$\mathbb{R}$};
	\draw[->] (2,0.2) node[align=left, anchor=south]{} -- (2,-0.45) node[align=center, anchor=north]{ $\beta$};
	\draw[->] (-2,0.2) node[align=left, anchor=south]{} -- (-2,-0.45) node[align=center, anchor=north]{ $\alpha$};
	\foreach \i in {-4,...,4} \draw[-, color= blue] (\i/2,0.2) node[align=left, anchor=south]{ $\ell_{\the \numexpr \i+4 \relax}$} -- (\i/2,-0.2) node[align=center, anchor=north]{};
	\foreach \i in {-4,...,3} \filldraw[color=red] (\i/2+0.25,0) circle (0.05);
	
	\node[anchor=south, color=red] at (-1.75, 0) {\scriptsize 000};
	\node[anchor=south, color=red] at (-1.25, 0) {\scriptsize 001};
	\node[anchor=south, color=red] at (-0.75, 0) {\scriptsize 010};
	\node[anchor=south, color=red] at (-0.25, 0) {\scriptsize 011};
	\node[anchor=south, color=red] at (0.25, 0) {\scriptsize 100};
	\node[anchor=south, color=red] at (0.75, 0) {\scriptsize 101};
	\node[anchor=south, color=red] at (1.25, 0) {\scriptsize 110};
	\node[anchor=south, color=red] at (1.75, 0) {\scriptsize 111};
	
	\node[color = green!75!red, inner sep=0pt, outer sep=-2pt] (x1) at (2.6,0) {$\times$}; 
	\node[inner sep=0pt, outer sep=1pt] (mid8) at (1.75,0) {};
	\path[->, color=green!75!red] (x1) edge[bend right=-90, looseness=1] node [left] {} (mid8);
	
	\node[color = green!75!red, inner sep=0pt, outer sep=-2pt] (x2) at (-1.4,0) {$\times$};
	\node[inner sep=0pt, outer sep=1pt] (mid2) at (-1.25,0) {};
	\path[->, color=green!75!red] (x2) edge[out=-90, in=-90, looseness=8] node [left] {} (mid2);
	
	\node[color = green!75!red, inner sep=0pt, outer sep=-2pt] (x3) at (0.92,0) {$\times$};
	\node[inner sep=0pt, outer sep=1pt] (mid6) at (0.75,0) {};
	\path[->, color=green!75!red] (x3) edge[out=-90, in=-90, looseness=7] node [left] {} (mid6);
	
\end{tikzpicture}
	\caption{Illustration of the quantization scheme when $B = 3$ --- the blue lines given by $\ell_i$ mark the separation between the $2^B$ equal bins, the red points denote the midpoints of these bins, and the green `$\times$'s (the values to be quantized) get mapped to their nearest midpoints.}
	\label{fig: quant_scheme}
\end{figure} 

\textbf{Details of the algorithm.} At round $i$, consider the agent $j \in [K]$ making its $k$\textsuperscript{th} cumulative arm pull, where $1 \leq k \leq t_i$. It observes a reward $r_{j,k}$. At the end of this round, it calculates the empirical mean of the rewards from arm $j$ observed over all rounds upto and including round $i$, $\muh_{j,i} = \frac{1}{t_i} \sum_{k=1}^{t_i} r_{j,k}$. By defining the confidence width
\begin{equation}
	U^\prime(i,\delta) = \sigma \sqrt{\frac{2 \log( 4Kt_i^2/\delta)}{t_i}}, \label{eqn: u_prime}
\end{equation}
for each round $i$ and arbitrary $\delta > 0$, it \ifreport can be shown (as in the proof of Lemma \ref{lem: delta}) \else follows \fi from the subgaussian concentration inequality \cite{lattimore_bandit_2020} and a union bound that
\begin{equation}
	\mathbb{P}\left(\cup_{i \geq 1}\cup_{j \in [K]}  |\hat{\mu}_{j,i} - \mu_j | > U^\prime(i,\delta)\right) \leq \delta. \label{eqn: p_less_delta}
\end{equation}
Thus, at any round $i$, the actual mean of arm $j$ lies in the interval $[\hat{\mu}_{j,i} - U^\prime(i,\delta),\hat{\mu}_{j,i} + U^\prime(i,\delta)]$ w.h.p. However, since the communication channel is bit-constrained, the agents cannot simply transmit the infinite precision real number $\muh_{j,i}$ as is --- they instead transmit a quantized version of $\muh_{j,i}$ as described above. Let $\mut_{j,i}$ be the decoded estimate of the mean of arm $j$ that the learner recovers at the end of round $i$, i.e., $\mut_{j,i} = \dec(\enc(\muh_{j,i}))$.

To account for the potential increase in error due to the quantization necessitated by the bit-constrained channel, we introduce a slack in the confidence interval through a different confidence width $U(i, \delta)$. The goal is to obtain a concentration bound for $\mut_{j,i} - \mu_j$ in terms of $U(i, \delta)$, in a form similar to \eqref{eqn: p_less_delta}. We now provide an intuitive explanation to motivate an expression of $U(i, \delta)$ that achieves exactly this. (Note that this is not meant to be a proof; that this does indeed work will be shown in Section \ref{sec: analysis}). First note that a straightforward application of the triangle inequality gives
\begin{equation}
	|\tilde{\mu}_{j,i} - \mu_j| \leq |\tilde{\mu}_{j,i}  - \hat{\mu}_{j,i}| + |\hat{\mu}_{j,i} - \mu_j|. \label{eqn: triangle}
\end{equation}
The first term corresponds to the quantization error and the second term corresponds to the error in the empirical mean itself. An interval in which the latter lies w.h.p.\ is taken care of by the bound \eqref{eqn: p_less_delta}, so it is enough to establish such an interval for the quantization error. Recall from the `Quantization scheme' description before Figure \ref{fig: quant_scheme} that the quantization error is at most $\frac{1}{2^{B+1}}$ times the width of the interval if the empirical mean $\muh_{j,i}$ is known to originally lie in the interval. Thus, our task is to find an appropriate interval in which $\muh_{j,i}$ lies w.h.p.\ to perform the quantization.

As the latest estimate of the mean that the learner has access to at the beginning of round $i$ is the quantized estimate at round $i-1$, i.e., $\mut_{j,i-1}$, we construct the interval to be centered around $\mut_{j,i-1}$. We also want that this interval contain the new empirical mean at round $i$, i.e., $\hat{\mu}_{j,i}$, w.h.p. We now find a recursive expression for $U(i, \delta)$ that inductively satisfies these properties. We first make the inductive assumption that $\mut_{j, i-1}$ lies in $[\mu_j - U(i-1, \delta), \mu_j+ U(i-1, \delta)]$ w.h.p. Combined with the knowledge that $\muh_{j,i}$ lies in $[\mu_j - U^\prime(i, \delta), \mu_j+ U^\prime(i, \delta)]$ w.h.p. from \eqref{eqn: p_less_delta}, we have that the interval $[\tilde{\mu}_{j,i-1} - U^\prime(i,\delta) - U(i-1,\delta),\tilde{\mu}_{j,i-1} + U^\prime(i,\delta) + U(i-1, \delta)]$ contains $\muh_{j,i}$ w.h.p. This is illustrated in Figure \ref{fig: U_Uprime}. Note that we have found an interval that we `expect' the empirical mean to belong to, as promised when defining the quantization scheme relative to an interval. Thus, we perform the quantization over an interval of width $2[U^\prime(i,\delta) +  U(i-1, \delta)]$ centered at $\mut_{j,i-1}$, and hence, the quantization error is at most $\frac{1}{2^B}[U^\prime(i,\delta) +  U(i-1, \delta)]$.

\begin{figure}[!htbp]
	\centering
	\begin{tikzpicture}[scale=1]
	\draw[thick,<->] (-4,0) node[anchor= north east, align=right] {} -- (4,0) node[anchor= west]{$\mathbb{R}$};
	\draw[<-] (0,1.5) node[align=left, anchor=south]{\small $\mut_{j,i-1}$} -- (0,-0.18) node[align=center, anchor=north]{};
	\draw[<-] (2,0.3) node[align=left, anchor=south]{\small $\ucb(j,i-1,\delta)$} -- (2,-0.18) node[align=center, anchor=north]{};
	\draw[<-] (-2,0.3) node[align=left, anchor=south]{\small $\lcb(j,i-1,\delta)$} -- (-2,-0.18) node[align=center, anchor=north]{};
	\draw[<->, color=blue] (0,1) node[anchor= south west, align=left] {\hspace{0.7em}\small $U(i-1, \delta)$} -- (2,1) node[anchor= north west]{};
	\draw[<->, color=blue] (0,1) node[anchor= north west]{} -- (-2,1) node[anchor= south west, align=left] {\hspace{0.7em}\small $U(i-1, \delta)$};
	
	\draw[->] (1.3,0.18) node[align=left, anchor=south]{} -- (1.3,-0.6) node[align=center, anchor=north]{\small $\mu_j$};
	\draw[<->, color=red] (1.3,-0.22) node[anchor= north west, align=left] {\hspace{0.5em}\small $U^\prime(i, \delta)$} -- (2.8,-0.22) node[anchor= north west]{};
	\draw[<->, color=red] (1.3,-0.22) node[anchor= north west]{} -- (-0.2,-0.22) node[anchor= north west, align=left] {\hspace{0.5em}\small $U^\prime(i, \delta)$};
	\draw[-, dashed, black!25] (2.8,0) node[align=left, anchor=south]{} -- (2.8,-1) node[align=center, anchor=north]{};
	\draw[-, dashed, black!25] (-0.2,0) node[align=left, anchor=south]{} -- (-0.2,-1) node[align=center, anchor=north]{};
	
	\draw [red,decorate,decoration={brace,amplitude=5pt,mirror},xshift=0pt,yshift=0pt]
	(-0.2,-1) -- (2.8,-1) node [black,midway,xshift=0cm,yshift=-0.7cm, text width=3.5cm, align=center] 
	{\footnotesize $\muh_{j,i}$ guaranteed to be here given this position of $\mu_j$};
	
	\draw[<->, color=red] (2,1) node[anchor= north west]{} -- (3.5,1) node[anchor= south east, align=left] {\hspace{0.3em}\small $U^\prime(i, \delta)$};
	\draw[<->, color=red] (-3.5,1) node[anchor= north west]{} -- (-2,1) node[anchor= south east, align=left] {\hspace{0.6em}\small $U^\prime(i, \delta)$};
	
	\draw[-, dashed, black!25] (2,0.3) node[align=left, anchor=south]{} -- (2,1.7) node[align=center, anchor=north]{};
	\draw[-, dashed, black!25] (-2,0.3) node[align=left, anchor=south]{} -- (-2,1.7) node[align=center, anchor=north]{};
	
	\draw[-, dashed, black!25] (3.5,0) node[align=left, anchor=south]{} -- (3.5,2) node[align=center, anchor=north]{};
	\draw[-, dashed, black!25] (-3.5,0) node[align=left, anchor=south]{} -- (-3.5,2) node[align=center, anchor=north]{};
	\draw [red!50!blue,decorate,decoration={brace,amplitude=5pt},xshift=0pt,yshift=0pt]
	(-3.5,2) -- (3.5,2) node [black,midway,xshift=0cm,yshift=0.4cm] 
	{\footnotesize $\muh_{j,i}$ guaranteed to be here in general, relative to $\mut_{j, i-1}$};
\end{tikzpicture}
	\caption{Motivation for defining $U(i, \delta)$ as \eqref{eqn: U_breakup} -- Start with $\mut_{j, i-1}$, then by the inductive hypothesis, $\mu_j$ should lie within $U(i-1, \delta)$ of $\mut_{j, i-1}$ (blue interval). From \eqref{eqn: p_less_delta}, $\muh_{j,i}$ lies within $U^\prime(i, \delta)$ of $\mu_j$ (red interval). Putting them together, $\muh_{j,i}$ lies within $U(i-1,\delta) + U^\prime(i, \delta)$ of $\mut_{j, i-1}$. }
	\label{fig: U_Uprime}
\end{figure}
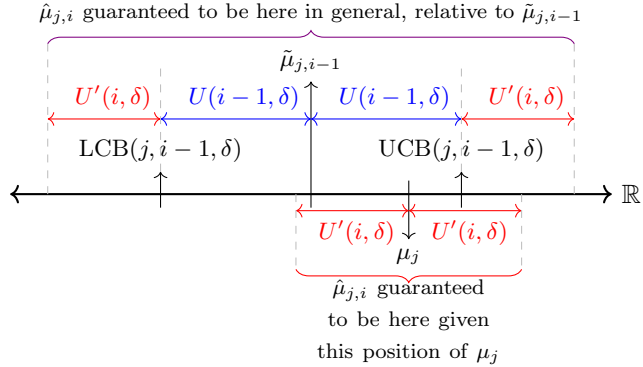

Motivated by the above and \eqref{eqn: triangle}, define, for $i \geq 1, j \in [K]$,
\begin{align*}
	U(i,\delta) 	&= \frac{1}{2^{B}}\left[U^\prime(i,\delta) +  U(i-1, \delta)\right] + U^\prime(i, \delta) \numberthis \label{eqn: U_breakup},
\end{align*}
where we set $U(0, \delta) =  b-a$ to ensure that the induction hypothesis (namely, that $\mut_{j,i-1} $ satisfies its confidence bounds) holds for the index $i-1 = 0$. Now that we have an appropriate confidence width $U(i, \delta)$ that we expect (heuristically, from the above discussion; this will be proved in Section \ref{sec: analysis}) to provide a similar confidence bound for $\mut_{j,i}$ as \eqref{eqn: p_less_delta} does for $\muh_{j,i}$, we define the appropriate LCB and UCB for our algorithm, as 
\begin{equation} \label{eqn: lcb_ucb}
\begin{split}
	\lcb(j,i,\delta) &= \mut_{j,i} - U(i, \delta),\\
	\ucb(j,i,\delta) &= \mut_{j,i} + U(i, \delta).
\end{split}    
\end{equation}

	\section{Analysis of the ICQ-SE algorithm} \label{sec: analysis}
The following result can be shown for the sample complexity of SE run on the distributed MAB setup outlined above if the channel from the agent to the learner is not bit-constrained:
\begin{theorem} \label{thm: se_results}
SE is a sound algorithm. Moreover, with probability at least $1-\delta$, it successfully identifies the best arm using at most
	\begin{align*}
		\mathcal{O} \left( \sum_{j \neq 1} \frac{102 \alpha \sigma^2}{\Delta_j^2} \ln \left(\frac{64 \sigma^2 \sqrt{4K\delta}}{\Delta_j^2}\right) + 1 \right)
	\end{align*} 
 samples. 
\end{theorem}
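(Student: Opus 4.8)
The plan is to run the whole argument on the high-probability ``clean'' event
\[
\mathcal{E} = \bigcap_{i \geq 1}\bigcap_{j \in [K]} \bigl\{\, |\muh_{j,i} - \mu_j| \leq U'(i,\delta) \,\bigr\},
\]
which by \eqref{eqn: p_less_delta} satisfies $\prob(\mathcal{E}) \geq 1-\delta$. Since the channel is unconstrained here, the learner sees full-precision means and the relevant bounds are $\lcb(j,i,\delta) = \muh_{j,i} - U'(i,\delta)$ and $\ucb(j,i,\delta) = \muh_{j,i} + U'(i,\delta)$. First I would establish soundness: on $\mathcal{E}$, for every arm $j$ and round $i$, $\lcb(j,i,\delta) \leq \mu_j \leq \mu_1 \leq \ucb(1,i,\delta)$, so $\max_j \lcb(j,i,\delta) \leq \mu_1 \leq \ucb(1,i,\delta)$ and the elimination test never fires for the best arm; hence arm $1$ is never removed. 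As the error event $\{J_{\tau_\delta}\neq 1\}$ is contained in $\mathcal{E}^c$, this yields $\prob(\tau_\delta<\infty,\, J_{\tau_\delta}\neq 1)\leq \prob(\mathcal{E}^c)\leq\delta$.

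Next I would pin down when each suboptimal arm $j$ is dropped. On $\mathcal{E}$, using $\muh_{1,i}\geq \mu_1 - U'(i,\delta)$ and $\muh_{j,i}\leq\mu_j + U'(i,\delta)$,
\[
\lcb(1,i,\delta)-\ucb(j,i,\delta) \;\geq\; \Delta_j - 4U'(i,\delta),
\]
so arm $j$ is guaranteed to be eliminated by the end of any round $i$ with $4U'(i,\delta)\leq\Delta_j$. Substituting \eqref{eqn: u_prime} and squaring, this is exactly the threshold condition
\[
t_i \;\geq\; \frac{32\sigma^2}{\Delta_j^2}\,\ln\!\Bigl(\tfrac{4Kt_i^2}{\delta}\Bigr).
\]

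To turn this into a closed-form sample count I would rewrite $\ln(4Kt_i^2/\delta)=2\ln\bigl(\sqrt{4K/\delta}\,t_i\bigr)$, putting the condition in the canonical form $t \geq A\ln(Bt)$ with $A = 64\sigma^2/\Delta_j^2$ and $B=\sqrt{4K/\delta}$, and solve it with the tangent-line estimate $\ln x \leq \eta x - \ln\eta - 1$ (equivalently a Lambert-$W$ bound). Optimizing the free parameter $\eta$ gives an explicit threshold of the form $t_j^\star = \Theta\bigl(\tfrac{\sigma^2}{\Delta_j^2}\ln(\tfrac{64\sigma^2}{\Delta_j^2}\sqrt{4K/\delta})\bigr)$, matching the logarithmic argument in the statement. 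Because the cumulative pull counts are exponentially spaced, $t_i=\alpha^i$, the first round $i_j$ with $t_{i_j}\geq t_j^\star$ overshoots by at most a factor $\alpha$, so arm $j$ contributes at most $t_{i_j}\leq \alpha\, t_j^\star$ pulls; this is the origin of the factor $\alpha$. Summing over $j\neq 1$, and observing that arm $1$ (never eliminated) is pulled exactly $\max_{j\neq 1} t_{i_j}$ times, which is dominated by the sum, together with the additive initialization term, yields the claimed bound.

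The only non-routine step is the last one: solving the transcendental inequality $t \geq A\ln(Bt)$ with the \emph{precise} constants. The tangent-line/Lambert-$W$ argument readily gives the correct order and the logarithmic argument, but the free parameter in the bound (together with the factor-$\alpha$ overshoot from the exponential spacing) must be tuned carefully to land exactly on the prefactor $102\alpha$. Everything else---the clean-event union bound, soundness, and the gap inequality $\lcb(1,i,\delta)-\ucb(j,i,\delta) \geq \Delta_j - 4U'(i,\delta)$---is direct.
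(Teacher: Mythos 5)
Your proposal is correct and follows essentially the same route the paper takes: the clean event from \eqref{eqn: p_less_delta}, elimination of arm $j$ once $4U'(i,\delta)\leq\Delta_j$, the factor-$\alpha$ overshoot from exponential spacing, and a Lambert-$W$/tangent-line solution of $t\geq A\ln(Bt)$ with $A=64\sigma^2/\Delta_j^2$, $B=\sqrt{4K/\delta}$ (the paper does not write out a proof of this theorem, but its proof of Theorem \ref{thm: sample_complexity} is exactly this argument with $4U'$ replaced by $8cU'$, using $W_{-1}(y)>\tfrac{e}{e-1}\ln(-y)$ to get the prefactor $\tfrac{e}{e-1}\cdot 64\approx 101\leq 102$). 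The only step you leave unfinished, pinning the constant $102$, is resolved by exactly that cited bound on $W_{-1}$.
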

We now show that ICQ-SE is a sound algorithm and also analyze its sample complexity. We restrict our attention to ICQ-SE just to provide concrete results, but a similar analysis can be carried out for other confidence bound-based algorithms as well (see Remark \ref{rem: general-algo}). We do so by a sequence of lemmas and theorems, similar to a standard analysis of Successive Elimination-type algorithms, such as in \cite{pac_2002} and \cite{srinivas_federated}. The main novelties in our work are Lemma \ref{lem: equivalence} and Theorem \ref{thm: sample_complexity}, where we relate the confidence intervals $U(i, \delta)$ and $U^\prime(i, \delta)$ at the agents and the learner respectively, thereby allowing us to prove similar results as for vanilla Successive Elimination. \ifreport Proofs of all lemmas and theorems stated below can be found in the Appendix.\else Except the proof of Lemma \ref{lem: equivalence}, which we provide here, all other proofs can be found in our technical report \cite{tech_report}.\fi

Recall that $\mu_j$ is the mean of arm $j \in [K]$, $\muh_{j,i}$ is the empirical mean of arm $j$ at round $i$ at the agent (which will then be encoded and sent to the learner), and $\mut_{j,i}$ is the decoded estimate of the mean of arm $j$ at round $i$ at the learner. Note that the only concentration bound we know about these quantities \textit{a priori} is \eqref{eqn: p_less_delta}, which relates $\muh_{j,i}$ and $U^\prime(i, \delta)$. The learner, however, observes $\mut_{j,i}$ and constructs confidence intervals of width $U(i,\delta)$. Our goal is for the learner to be able to identify w.h.p. the best arm in $[K]$. To this end, it is desirable to have a concentration bound on the quantities available at the learner, namely, $\mut_{j,i}$ and $U(i, \delta)$. 

Lemma \ref{lem: equivalence} below relates the following  --- (1) the event that in some round $i$ (hence the union over rounds), the estimated mean of arm $j$, $\muh_{j,i}$,   falls outside the confidence interval of width $U^\prime(i, \delta)$ centered about the true mean $\mu_j$, and (2) the identical event at the learner, except with the decoded mean $\mut_{j,i}$ and confidence width $U(i, \delta)$.

\begin{lemma} \label{lem: equivalence}
	For all arms $1 \leq j \leq K$, and any $\delta > 0$,
	\begin{align*}
		\bigcup_{i = 1}^\infty \left\{ |\mut_{j,i} - \mu_j | > U(i,\delta) \right\} \subseteq \bigcup_{i=1}^\infty \left\{ |\muh_{j,i} - \mu_j | > U^\prime(i,\delta) \right\}.
	\end{align*}
\end{lemma}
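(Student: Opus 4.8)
The plan is to prove the equivalent contrapositive statement obtained by taking complements of both sides. Writing $G_j = \bigcap_{i=1}^\infty \{ |\muh_{j,i} - \mu_j| \leq U^\prime(i,\delta) \}$ for the ``good event'' that all empirical means concentrate, the claimed inclusion is equivalent to $G_j \subseteq \bigcap_{i=1}^\infty \{ |\mut_{j,i} - \mu_j| \leq U(i,\delta) \}$. This reformulation is attractive because it turns a probabilistic inclusion into a deterministic, pathwise statement: I would fix an arbitrary outcome $\omega \in G_j$ (a sample path on which every empirical mean satisfies its $U^\prime$-bound) and show, by induction on $i$, that on this $\omega$ every decoded mean satisfies its $U(i,\delta)$-bound. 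Since $\omega \in G_j$ was arbitrary, this yields the required inclusion.

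For the induction, the base case is $i = 0$: by construction $U(0,\delta) = b - a$, while both $\mut_{j,0}$ (sampled uniformly from $[a,b]$ in Algorithm \ref{algo: learner_side}) and $\mu_j$ lie in $[a,b]$ by the boundedness assumption, so $|\mut_{j,0} - \mu_j| \leq b - a = U(0,\delta)$ holds unconditionally. For the inductive step, I assume $|\mut_{j,i-1} - \mu_j| \leq U(i-1,\delta)$ and aim to establish $|\mut_{j,i} - \mu_j| \leq U(i,\delta)$. The natural decomposition is the triangle inequality \eqref{eqn: triangle}, $|\mut_{j,i} - \mu_j| \leq |\mut_{j,i} - \muh_{j,i}| + |\muh_{j,i} - \mu_j|$, where the second (empirical) term is at most $U^\prime(i,\delta)$ because $\omega \in G_j$. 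It then remains to bound the first (quantization) term by $\frac{1}{2^{B}}[U^\prime(i,\delta) + U(i-1,\delta)]$, since adding this to $U^\prime(i,\delta)$ gives exactly $U(i,\delta)$ by the definition \eqref{eqn: U_breakup}.

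The crux of the argument, and the step I expect to be the main obstacle, is justifying the quantization-error bound, which requires verifying that $\muh_{j,i}$ genuinely lies inside the quantization interval $L_{i,j}$ used by the encoder. Unpacking $L_{i,j}$ via \eqref{eqn: lcb_ucb}, it is the interval centered at $\mut_{j,i-1}$ with half-width $U(i-1,\delta) + U^\prime(i,\delta)$. To show $\muh_{j,i} \in L_{i,j}$ I would bound $|\muh_{j,i} - \mut_{j,i-1}| \leq |\muh_{j,i} - \mu_j| + |\mu_j - \mut_{j,i-1}| \leq U^\prime(i,\delta) + U(i-1,\delta)$, where the first summand again uses $\omega \in G_j$ and the second uses the inductive hypothesis; this matches the half-width of $L_{i,j}$ exactly, placing $\muh_{j,i}$ in $L_{i,j}$. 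Once this containment is secured, the ``at most $\frac{1}{2^{B+1}}$ times the width'' guarantee of the quantization scheme (stated before Figure \ref{fig: quant_scheme}) applies to $\muh_{j,i}$, giving $|\mut_{j,i} - \muh_{j,i}| \leq \frac{1}{2^{B}}[U^\prime(i,\delta) + U(i-1,\delta)]$. Combining the two bounds and invoking \eqref{eqn: U_breakup} closes the induction. The delicate point worth stating carefully is that the containment $\muh_{j,i} \in L_{i,j}$ is precisely the place where both hypotheses (the concentration of $\muh_{j,i}$ and the inductive control of the \emph{previous} decoded mean $\mut_{j,i-1}$) are used simultaneously, so the recursive definition of $U(i,\delta)$ is exactly what makes the induction self-consistent.
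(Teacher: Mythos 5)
Your proof is correct and is essentially the paper's own argument in contrapositive form: the paper runs the same induction directly on the error events (using that $x_1+x_2>y_1+y_2$ forces $x_1>y_1$ or $x_2>y_2$, and that a large quantization error forces $\muh_{j,i}\notin L_{i,j}$), whereas you fix a path in the good event and verify the containment $\muh_{j,i}\in L_{i,j}$ pathwise. The key steps --- the triangle-inequality decomposition \eqref{eqn: triangle}, the recursive identity \eqref{eqn: U_breakup}, and the half-width computation for $L_{i,j}$ --- are identical.
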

\ifreport
\else
\begin{proof}[Proof of Lemma \ref{lem: equivalence}]
	It is sufficient to show the following stronger statement, where $\mut_{j,0}$ is sampled uniformly on $[a,b]$ and $U(0, \delta)=b-a$,
	\begin{align*}
		\bigcup_{i = 0}^\infty \left\{ |\mut_{j,i} - \mu_j | > U(i,\delta) \right\} \subseteq \bigcup_{i=1}^\infty \left\{ |\muh_{j,i} - \mu_j | > U^\prime(i,\delta) \right\}.
	\end{align*}
	We do so by instead proving by induction the following equivalent statement --- for each $0 \leq i < \infty$,
	\begin{equation*}
		\left\{ |\mut_{j,i} - \mu_j | > U(i,\delta) \right\} \subseteq \bigcup_{k=1}^\infty \left\{ |\muh_{j,k} - \mu_j | > U^\prime(k,\delta) \right\}.		
	\end{equation*}
	The base case, $i=0$, holds trivially as the left-hand side is $\emptyset$.
	Assume now that this property holds for index $i-1 \geq 0$. 
	Then, we have
	\begin{align*}
		&\left\{ |\mut_{j,i} - \mu_j | > U(i,\delta) \right\} \\
		\subseteq &\left\{ |\mut_{j,i} - \muh_{j,i} | + |\muh_{j,i} - \mu_j | > U(i,\delta) \right\},\\
		\subseteq &\left\{ |\muh_{j,i} - \mu_j | > U^\prime(i,\delta) \right\} \cup\\& \left\{ |\mut_{j,i} - \muh_{j,i} | > \frac{1}{2^{B}}\left[U^\prime(i,\delta) + U(i-1, \delta)\right] \right\}, \numberthis \label{eqn: u_uprime_analysis}
	\end{align*}
	where the first step follows from \eqref{eqn: triangle}, the second step follows from \eqref{eqn: U_breakup} and the fact that $x_1 + x_2 > y_1 + y_2$ requires at least one of $x_1 > y_1$ or $x_2 > y_2$.
	
	We now bound the second term in the union on the right-hand side of \eqref{eqn: u_uprime_analysis}. The empirical mean $\muh_{j,i}$ is encoded using $\enc$ on the interval $[\lcb(j,i-1,\delta)-U^\prime(i,\delta),\ucb(j,i-1,\delta)+U^\prime(i,\delta)]$. As long as $\muh_{j,i}$ lies in this interval, the learner can decode $\mut_{j,i}$ to an error within $\frac{1}{2} \cdot \frac{1}{2^{B}}$ times the width of this interval, which is $2 \left (U^\prime(i,\delta) + U(i-1, \delta) \right )$. Hence we have
	\begin{align*}
		&\left\{ |\mut_{j,i} - \muh_{j,i} | > \frac{1}{2^{B}}U^\prime(i,\delta) + \frac{1}{2^{B}} U(i-1, \delta) \right\} \\
		\subseteq &\left\{ |\mut_{j,i-1} - \muh_{j,i} | >  U^\prime(i,\delta) + U(i-1, \delta) \right\} \\
		\subseteq &\left\{ |\mut_{j,i-1} - \mu_j | >  U(i-1, \delta) \right\} \cup \left\{ |\muh_{j,i} - \mu_j | > U^\prime(i,\delta)  \right\} \\
		\subseteq &\bigcup_{k=1}^\infty \left\{ |\muh_{j,k} - \mu_j | > U^\prime(k,\delta) \right\}, \numberthis \label{eqn: error_event_final}
	\end{align*}
	where the last step follows from the induction hypothesis. 
\end{proof}
\fi
Using Lemma \ref{lem: equivalence}, we obtain the desired confidence bound on the decoded means at the learner, stated formally below. 
\begin{lemma} \label{lem: delta}
	For any $\delta > 0$, define the event
	\begin{equation*}
		\mathcal{E} := \bigcup_{j= 1}^K\bigcup_{i=1}^\infty  \left\{ |\mut_{j,i} - \mu_j | > U(i,\delta) \right\},
	\end{equation*}
	then $
	\prob \left( \mathcal{E} \right ) \leq \delta $.
\end{lemma}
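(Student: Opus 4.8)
The plan is to deduce the bound almost directly from Lemma~\ref{lem: equivalence}, after first establishing the concentration inequality \eqref{eqn: p_less_delta} for the empirical means $\muh_{j,i}$ (whose justification the text defers precisely to this proof). The key observation is that $\mathcal{E}$ is exactly the union over arms $j \in [K]$ of the left-hand events appearing in Lemma~\ref{lem: equivalence}. So once \eqref{eqn: p_less_delta} is in hand, taking a union of the set inclusion from Lemma~\ref{lem: equivalence} over $j$ gives $\mathcal{E} \subseteq \bigcup_{j=1}^K \bigcup_{i=1}^\infty \{|\muh_{j,i} - \mu_j| > U^\prime(i,\delta)\}$, and monotonicity of probability together with \eqref{eqn: p_less_delta} finishes the argument.

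First I would prove \eqref{eqn: p_less_delta}. Since $\muh_{j,i}$ is the average of $t_i$ i.i.d.\ $\sigma^2$-subgaussian rewards, it is $\sigma^2/t_i$-subgaussian, so the subgaussian tail bound gives $\prob(|\muh_{j,i} - \mu_j| > U^\prime(i,\delta)) \leq 2\exp(-t_i U^\prime(i,\delta)^2 / 2\sigma^2)$. Plugging in the definition \eqref{eqn: u_prime} of $U^\prime(i,\delta)$, the exponent collapses to exactly $\log(4Kt_i^2/\delta)$, so each term is at most $\delta/(2Kt_i^2)$. I would then union bound over $j \in [K]$ and $i \geq 1$: the sum over the $K$ arms cancels the factor $K$, leaving $\frac{\delta}{2}\sum_{i\geq 1} t_i^{-2}$.

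The one place that requires care --- and the only real (if minor) obstacle --- is controlling this series. Because every batch size satisfies $b_i \geq 1$, the cumulative counts obey $t_i \geq i$, so $\sum_{i\geq1} t_i^{-2} \leq \sum_{i\geq1} i^{-2} = \pi^2/6 < 2$, and the whole bound is at most $\delta$. This is exactly where the constant $4K$ inside $U^\prime$ was chosen, so that the double union bound closes at $\delta$ rather than a larger multiple; note that the exponential sparsity $t_i = \alpha^i$ only makes the series converge faster and is not actually needed for this step.

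Finally I would assemble the pieces: applying Lemma~\ref{lem: equivalence} arm-by-arm and taking the union over $j$ yields $\mathcal{E} \subseteq \bigcup_{j=1}^K\bigcup_{i=1}^\infty\{|\muh_{j,i}-\mu_j| > U^\prime(i,\delta)\}$, whence $\prob(\mathcal{E}) \leq \delta$ by \eqref{eqn: p_less_delta}. I expect no genuine difficulty beyond the bookkeeping above, since the substantive work --- transferring a deviation of the decoded estimate $\mut_{j,i}$ into a deviation of the empirical mean $\muh_{j,i}$ --- is already contained in Lemma~\ref{lem: equivalence}.
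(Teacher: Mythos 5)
Your proposal is correct and follows essentially the same route as the paper: invoke Lemma~\ref{lem: equivalence} to transfer the event to the empirical means, then establish \eqref{eqn: p_less_delta} via the subgaussian tail bound, a union bound over $j$ and $i$, and the observation that $\sum_{i\geq 1} t_i^{-2} \leq \sum_{i \geq 1} i^{-2} < 2$ (which, as you note, needs only $t_i \geq i$, not exponential sparsity). The bookkeeping of the constants matches the paper's computation exactly.
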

The above lemma establishes that the event $\mathcal{E}^c$, wherein for each arm $j$ and at every round $i$, the decoded estimate at the learner $\muh_{j,i}$ is sufficiently close to the actual mean $\mu_j$, occurs with large probability. It then follows that with high probability, any arm that is eliminated during the successive elimination procedure must be suboptimal\ifreport, as stated in Theorem \ref{thm: best_arm}\else. We state and prove this more formally in \cite{tech_report}\fi. We now show that ICQ-SE is a sound algorithm and provide an upper bound for its sample complexity in the exponentially sparse regime (i.e., with $t_i = \alpha^i$) in Theorem \ref{thm: sample_complexity}.

\ifreport
\begin{theorem}\label{thm: best_arm}
	With probability $\geq 1 - \delta$, the best arm remains in the active set S until termination.
\end{theorem}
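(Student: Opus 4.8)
The plan is to condition on the complement of the bad event $\mathcal{E}$ from Lemma \ref{lem: delta} and then argue, path-by-path, that the elimination rule can never discard arm $1$. Lemma \ref{lem: delta} gives $\prob(\mathcal{E}) \leq \delta$, so $\prob(\mathcal{E}^c) \geq 1 - \delta$, and it suffices to prove the purely deterministic statement that arm $1$ survives on every realization in $\mathcal{E}^c$. The reason for working with $\mathcal{E}^c$ rather than a single-round event is that $\mathcal{E}$ is a union over all rounds $i$ and all arms $j$; hence on $\mathcal{E}^c$ we simultaneously have $|\mut_{j,i} - \mu_j| \leq U(i,\delta)$ for every $j \in [K]$ and every $i \geq 1$. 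By the definitions in \eqref{eqn: lcb_ucb}, this is precisely the statement that $\mu_j \in [\lcb(j,i,\delta),\ \ucb(j,i,\delta)]$ for all $j$ and $i$ at once, which is exactly the uniform-over-time guarantee the elimination argument needs.

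Next I would translate the elimination rule into a condition on the true means. The learner removes arm $m$ at round $i$ only when $\max_{j \in [K]} \lcb(j,i,\delta) \geq \ucb(m,i,\delta)$. Taking $m = 1$ and using that, on $\mathcal{E}^c$, $\lcb(j,i,\delta) \leq \mu_j$ for every $j$ and $\mu_1 \leq \ucb(1,i,\delta)$, together with the optimality assumption $\mu_j \leq \mu_1$, yields
\begin{equation*}
  \lcb(j,i,\delta) \leq \mu_j \leq \mu_1 \leq \ucb(1,i,\delta) \quad \text{for all } j \in [K],
\end{equation*}
and hence $\max_{j \in [K]} \lcb(j,i,\delta) \leq \ucb(1,i,\delta)$. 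This contradicts the elimination condition for arm $1$, so arm $1$ is not removed in round $i$; since $i$ was arbitrary, arm $1$ stays in $S$ until termination.

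The one point requiring care — and the only genuine obstacle — is the boundary case where the chain above holds with equality, since the rule uses $\geq$ rather than a strict $>$. I would resolve it as in standard SE analyses: when the best arm is unique ($\mu_j < \mu_1$ for $j \neq 1$) the middle inequality is strict for every $j \neq 1$, while for $j = 1$ one has $\lcb(1,i,\delta) = \mut_{1,i} - U(i,\delta) < \mut_{1,i} + U(i,\delta) = \ucb(1,i,\delta)$ because $U(i,\delta) > 0$; together these give the strict bound $\max_{j} \lcb(j,i,\delta) < \ucb(1,i,\delta)$, so arm $1$ escapes elimination even under the $\geq$ rule. (If ties for the best mean are allowed, the conclusion should be read as ``a best arm remains,'' or the rule interpreted with strict inequality.) Assembling the pieces, arm $1$ remains in $S$ throughout on $\mathcal{E}^c$, an event of probability at least $1 - \delta$, which is the claim.
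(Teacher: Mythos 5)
Your proposal is correct and follows essentially the same route as the paper: condition on the complement of the event $\mathcal{E}$ from Lemma \ref{lem: delta}, use the resulting uniform sandwich $\lcb(j,i,\delta) \leq \mu_j \leq \ucb(j,i,\delta)$ for all $j$ and $i$, and conclude that the elimination rule cannot discard arm $1$. You are in fact slightly more careful than the paper on the boundary/tie case (the paper only derives that dropping arm $1$ forces $\mu_l \geq \mu_1$ for some $l$ and leaves the equality case implicit), but this is a refinement of the same argument, not a different one.
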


To prove our main result in Theorem \ref{thm: sample_complexity}, we require a technical lemma that relates the confidence widths $U(i, \delta)$ and $U^\prime(i, \delta)$.

\begin{lemma} \label{lem: U_ub}
	Consider $B \geq 1$ and let $t_i=\alpha^i$ where $\alpha \in \mathbb{N}$ such that $\alpha < 2^{2B}$. Then for $i \geq 1$,
	\begin{align*}
		U(i,\delta) \leq 2c\, U^\prime(i, \delta),
	\end{align*}
	where 
	\begin{align*}
		c = \left(1 + \frac{2}{2^{B}}\right)\frac{2^{B}}{2^{B} - \sqrt{\alpha}}.
	\end{align*}
\end{lemma}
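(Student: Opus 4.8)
The plan is to solve the recurrence \eqref{eqn: U_breakup} in closed form and then bound each resulting piece against $U^\prime(i,\delta)$. Writing $\gamma = 2^{-B}$, the recurrence reads $U(i,\delta) = (1+\gamma)\,U^\prime(i,\delta) + \gamma\, U(i-1,\delta)$, so unrolling it down to the base case $U(0,\delta) = b-a$ gives
\begin{equation*}
U(i,\delta) = (1+\gamma)\sum_{k=0}^{i-1}\gamma^k\, U^\prime(i-k,\delta) + \gamma^i (b-a).
\end{equation*}
The strategy is to show that the sum contributes at most $c\,U^\prime(i,\delta)$ and that the boundary term $\gamma^i(b-a)$ contributes at most another $c\,U^\prime(i,\delta)$; together these give the claimed factor $2c$.

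For the first piece, the key observation is a scaling bound relating $U^\prime$ across rounds. Using the explicit form \eqref{eqn: u_prime} with $t_m=\alpha^m$, namely $U^\prime(m,\delta)=\sigma\sqrt{2\log(4K\alpha^{2m}/\delta)/\alpha^m}$, and noting that $\log(4K\alpha^{2m}/\delta)=\log(4K/\delta)+2m\log\alpha$ is increasing in $m$, I would establish that for $0\le k\le i$,
\begin{equation*}
U^\prime(i-k,\delta)\le \alpha^{k/2}\,U^\prime(i,\delta),
\end{equation*}
since the $\alpha^{-m}$ factor produces exactly $\alpha^{k/2}$ while the logarithmic factor only shrinks as $k$ grows. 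Substituting this turns the sum into a geometric series with ratio $\gamma\sqrt\alpha=\sqrt\alpha/2^{B}$, which is strictly less than $1$ precisely because of the hypothesis $\alpha<2^{2B}$. Bounding the finite series by its infinite sum $\tfrac{1}{1-\gamma\sqrt\alpha}=\tfrac{2^{B}}{2^{B}-\sqrt\alpha}$ then yields $(1+\gamma)\tfrac{2^{B}}{2^{B}-\sqrt\alpha}\,U^\prime(i,\delta)$, and since $1+\gamma=1+2^{-B}\le 1+2\cdot 2^{-B}$, this is at most $c\,U^\prime(i,\delta)$.

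It remains to control the boundary term $\gamma^i(b-a)=(b-a)/2^{Bi}$, which I expect to be the main obstacle, because unlike the $U^\prime$ terms it carries the raw support width $b-a$ rather than the concentration scale $\sigma$. The idea is to rewrite $2^{-Bi}=(\sqrt\alpha/2^{B})^i/\sqrt{\alpha^{i}}$ and use $\alpha<2^{2B}$ once more, so that $(\sqrt\alpha/2^{B})^i\le \sqrt\alpha/2^{B}\le 1$ for $i\ge 1$; this gives $\gamma^i(b-a)\le (b-a)/\sqrt{t_i}$, which I then compare with the lower bound $U^\prime(i,\delta)\ge \sigma\sqrt{2\log(4K/\delta)}/\sqrt{t_i}$ (valid since $t_i\ge 1$ makes the logarithm only larger). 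The remaining step is to argue that $(b-a)\big/\big(\sigma\sqrt{2\log(4K/\delta)}\big)\le c$; this is where the bounded-subgaussian relationship enters (a variable supported on $[a,b]$ satisfies $b-a\le 2\sigma$ when $\sigma^2$ is taken as the Hoeffding parameter $(b-a)^2/4$), together with $4K/\delta\ge 4$, so that this ratio is an absolute constant comfortably below $c$.

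The delicate part is therefore not any single inequality but the constant bookkeeping that makes both contributions fit inside $c$ each: the definition of $c$ carries the coefficient $1+2\cdot 2^{-B}$ rather than the $1+2^{-B}$ produced by the geometric sum, and this extra $2^{-B}$ of slack, combined with the overall factor of two, is exactly what absorbs the boundary term after the bounded-subgaussian estimate. I would also record at the outset that $\alpha\in\mathbb{N}$ with $\alpha>1$ forces $\alpha\ge 2$, and that $\alpha<2^{2B}$ guarantees $2^{B}-\sqrt\alpha>0$, so $c$ is well-defined and positive throughout.
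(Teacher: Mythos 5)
Your unrolling of the recurrence, the monotonicity bound $U^\prime(i-k,\delta)\le \alpha^{k/2}U^\prime(i,\delta)$, and the geometric-series estimate with ratio $\sqrt{\alpha}/2^{B}$ reproduce exactly the first half of the paper's argument, and that part is correct (the paper in fact gets the slightly sharper coefficient $1+2^{-B}$ there, so the $1+2\cdot 2^{-B}$ in $c$ is pure slack). The boundary term is also reduced, as in the paper, to $\gamma^i(b-a)\le (b-a)/\sqrt{t_i}$ via $(\sqrt{\alpha}/2^{B})^i\le 1$.

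The gap is in the last step, where you dispose of $(b-a)/\sqrt{t_i}$ by asserting $b-a\le 2\sigma$. This does not follow from the paper's hypotheses: the model assumes the rewards are \emph{both} $\sigma^2$-subgaussian \emph{and} supported on $[a,b]$, with $\sigma$ a given parameter that enters $U^\prime(i,\delta)$, and nothing forces $\sigma$ to be the Hoeffding value $(b-a)/2$. A distribution concentrated near one endpoint of $[a,b]$ admits a subgaussian parameter $\sigma$ arbitrarily small relative to $b-a$, in which case $(b-a)/\bigl(\sigma\sqrt{2\log(4K/\delta)}\bigr)$ can exceed any constant and your comparison with $c$ fails. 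What is genuinely needed is a relation of the form $\sigma\sqrt{2\log(4Kt_i^2/\delta)}\ge b-a$, and the paper obtains it not from boundedness but by restricting the confidence parameter: it requires
\begin{align*}
\delta < 4K\alpha^2\exp\left(-\frac{(b-a)^2}{2\sigma^2}\right),
\end{align*}
so that the inequality holds for all $i\ge 1$ (a condition that, admittedly, is also absent from the lemma statement itself). To repair your proof you must either import this smallness condition on $\delta$, or explicitly add the hypothesis $\sigma\ge (b-a)/2$; as written, the step is unjustified. A secondary, smaller point: even granting $b-a\le 2\sigma$, the claim that the resulting ratio is ``comfortably below $c$'' needs the observation that $K\ge 2$ and $\delta<1$ give $2/\sqrt{2\log(4K/\delta)}<1\le c$, since $c$ itself tends to $1$ as $B\to\infty$ — the margin is not large.
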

\fi

\begin{theorem} \label{thm: sample_complexity}
	Consider $B \geq 1$ and let $t_i=\alpha^i$ where $\alpha \in \mathbb{N}$ and $1<\alpha < 2^{2B}$. With probability at least $1-\delta$, ICQ-SE will terminate and successfully identify the best arm after using 
	\begin{align*}
		\mathcal{O} \left( \sum_{j \neq 1} \frac{410 \alpha c^2 \sigma^2}{\Delta_j^2} \ln \left(\frac{256c^2 \sigma^2 \sqrt{4K\delta}}{\Delta_j^2}\right) + 1 \right)
	\end{align*} 
	samples, 
 	\begin{align*}
		\mathcal{O} \left( \sum_{j \neq 1} \log_{\alpha} \left ( \frac{410 \alpha c^2 \sigma^2}{\Delta_j^2} \ln \left(\frac{256c^2 \sigma^2 \sqrt{4K\delta}}{\Delta_j^2}\right)  + 1 \right) \right )
	\end{align*} 
 rounds, and
  	\begin{align*}
		\mathcal{O}  \left( B \sum_{j \neq 1} \log_{\alpha} \left ( \frac{410 \alpha c^2 \sigma^2}{\Delta_j^2} \ln \left(\frac{256c^2 \sigma^2 \sqrt{4K\delta}}{\Delta_j^2}\right)  + 1 \right) \right )
	\end{align*}
 bits, where \ifreport $c$ is as defined in Lemma \ref{lem: U_ub}. \else  
 \begin{align*}
		c = \left(1 + \frac{2}{2^{B}}\right)\frac{2^{B}}{2^{B} - \sqrt{\alpha}}. 
	\end{align*}\fi
\end{theorem}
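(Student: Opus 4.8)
The plan is to carry out the entire argument on the good event $\mathcal{E}^c$ of Lemma \ref{lem: delta}, which holds with probability at least $1-\delta$ and on which $|\mut_{j,i} - \mu_j| \leq U(i,\delta)$ simultaneously for every arm $j$ and round $i$. On this event Theorem \ref{thm: best_arm} already guarantees that arm $1$ never leaves the active set $S$. Showing that every suboptimal arm is eventually removed then settles everything at once: the active set shrinks to $\{1\}$ so the stopping rule $|S|=1$ fires (termination), and the returned arm is $1$, so the error probability is confined to $\mathcal{E}$ and is at most $\delta$ (soundness).

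First I would pin down when a suboptimal arm $j$ is eliminated. On $\mathcal{E}^c$ we have $\lcb(1,i,\delta) = \mut_{1,i} - U(i,\delta) \geq \mu_1 - 2U(i,\delta)$ and $\ucb(j,i,\delta) = \mut_{j,i} + U(i,\delta) \leq \mu_j + 2U(i,\delta)$, so since arm $1$ is active, $\max_{k}\lcb(k,i,\delta) - \ucb(j,i,\delta) \geq \lcb(1,i,\delta) - \ucb(j,i,\delta) \geq \Delta_j - 4U(i,\delta)$. Thus arm $j$ is dropped from $S$ as soon as $U(i,\delta) \leq \Delta_j/4$. Invoking Lemma \ref{lem: U_ub}, which gives $U(i,\delta) \leq 2c\,U'(i,\delta)$ in the regime $t_i = \alpha^i$ with $1 < \alpha < 2^{2B}$, it then suffices to ensure $U'(i,\delta) \leq \Delta_j/(8c)$.

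The next step is to convert $U'(i,\delta) \leq \Delta_j/(8c)$ into an explicit sample count. Since $U'(i,\delta) = \sigma\sqrt{2\log(4Kt_i^2/\delta)/t_i}$, this is the inequality $\frac{2\sigma^2\log(4Kt_i^2/\delta)}{t_i} \leq \frac{\Delta_j^2}{64c^2}$. I observe that this is precisely the elimination condition analyzed for unquantized SE in Theorem \ref{thm: se_results}, but with the gap $\Delta_j$ replaced by the inflated gap $\Delta_j/(2c)$. Hence the per-arm sample count $t_j^\ast$ at which arm $j$ is guaranteed to be eliminated follows from the unquantized bound under the substitution $\Delta_j^2 \mapsto \Delta_j^2/(4c^2)$, which turns $\frac{102\alpha\sigma^2}{\Delta_j^2}\ln(\cdots)$ into $\frac{410\alpha c^2\sigma^2}{\Delta_j^2}\ln\left(\frac{256 c^2\sigma^2\sqrt{4K\delta}}{\Delta_j^2}\right)$, matching the claimed constants. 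The one genuine subtlety is the batching: because $t_i$ jumps by a factor $\alpha$ between consecutive rounds, the true crossing time is overshot by at most a factor $\alpha$, which is the source of the extra $\alpha$ (already present in the unquantized statement). Summing $t_j^\ast$ over $j \neq 1$ and bounding arm $1$'s contribution by the largest such term gives the sample complexity.

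Finally, the round and bit complexities follow by a change of variable. Since arm $j$ accrues $t_i = \alpha^i$ cumulative pulls through round $i$, it stays active for at most $i_j = \log_\alpha(t_j^\ast)$ rounds; taking $\log_\alpha$ of $t_j^\ast$ yields the stated per-arm expression, with the $+1$ inside the logarithm guaranteeing a nonnegative exponent. The algorithm terminates at round $\max_{j \neq 1} i_j \leq \sum_{j \neq 1} i_j$, giving the round bound, while the total number of transmitted bits is $B$ times the total number of agent transmissions, namely $B\sum_{j \neq 1} i_j$, giving the communication complexity. I expect the main obstacle to be the explicit logarithmic inversion of $U'(i,\delta) \leq \Delta_j/(8c)$ with sharp constants: done directly (rather than via the $\Delta_j \mapsto \Delta_j/(2c)$ reduction) it needs an inversion lemma of the form ``$\frac{\log(at^2)}{t} \leq b$ whenever $t \geq \frac{C}{b}\log\frac{C'}{b}$'' together with careful bookkeeping of the batching factor $\alpha$, which is where the slightly loose constant $410$ (in place of $408 = 4\cdot 102$) arises.
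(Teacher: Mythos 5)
Your proposal is correct and follows essentially the same route as the paper: work on $\mathcal{E}^c$, reduce elimination of arm $j$ to $\Delta_j \geq 4U(i,\delta)$, apply Lemma \ref{lem: U_ub} to get the sufficient condition $U'(i,\delta) \leq \Delta_j/(8c)$, pay a factor $\alpha$ for the batched overshoot, and then sum over arms and take $\log_\alpha$ for the round and bit counts. The only difference is the final inversion: where you defer to Theorem \ref{thm: se_results} via the substitution $\Delta_j \mapsto \Delta_j/(2c)$, the paper carries it out explicitly by writing the crossing condition as $-a t_i e^{-a t_i} = -a e^{-b}$ with $a = \Delta_j^2/(256 c^2\sigma^2)$, $b = \ln(4K/\delta)/2$, and bounding the Lambert branch $W_{-1}$ by $\frac{e}{e-1}\ln(-y)$, which is exactly the inversion lemma you anticipated needing and is where the constant $410 \approx \frac{e}{e-1}\cdot 256$ comes from.
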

\ifreport \else
A key step in the proof of Theorem \ref{thm: sample_complexity} is to show that $U(i,\delta) \leq 2c\, U^\prime(i, \delta)$, i.e., the confidence widths for the quantized values are within a constant multiplicative factor of those for the empirical means. \fi Comparing Theorem \ref{thm: sample_complexity} with the equivalent result for Successive Elimination with no quantization in Theorem \ref{thm: se_results}, we see that the upper bound for the sample complexity is worse only by a constant factor, i.e., we have order-optimal performance.

Additionally, note that $c$ depends on the choice of $B$ and $\alpha$, and in fact decreases as $B$ and $\alpha$ increase. Combined with the upper bound on sample complexity in Theorem~\ref{thm: sample_complexity}, we thus see a trade-off between the performance of the algorithm and the number of bits that it is allowed to use. In addition to the above theoretical result, we also investigate this trade-off via numerical simulations in Section \ref{sec: sims}.

\begin{remark} \label{rem: general-algo}
	The quantization scheme ICQ can in fact be used for any other algorithm that uses confidence bounds, such as LUCB \cite{kalyanakrishnan2012lucb} and lil'UCB \cite{jamieson2014lil}, to obtain algorithms ICQ-LUCB and ICQ-lil'UCB. This is because the crux of this scheme is the recursive definition for $U(\cdot,\cdot)$ that we obtain by separating the quantization error and the error in the empirical mean itself, via \eqref{eqn: triangle}. A similar  analysis can be carried out for ICQ-LUCB and ICQ-lil'UCB too, that we omit here for brevity.
\end{remark}

\begin{remark} \label{rem: unbounded}
    We use the boundedness of the rewards only in the definition of $U(0,\delta)$. Recall that the algorithm proceeds with an initial random guess for the mean of each arm $\{\tilde{\mu}_{j,0}\}_{j=1}^K$. As $U(i,\delta)$ is defined in a recursive fashion, for the induction in the proof of Lemma \ref{lem: equivalence} to hold, $U(0,\delta)$ needs to be such that $\{\tilde{\mu}_{j,0}\}_{j=1}^K$ are good guesses for the actual means with high probability satisfying Lemma \ref{lem: equivalence}. For $[a, b]$-bounded rewards, this holds trivially by taking $U(0, \delta) = (b - a)$, as $\left\{ |\mut_{j,0} - \mu_j | >  U(0, \delta) \right\} =  \left\{ |\mut_{j,0} - \mu_j | >  (b-a) \right\} = \emptyset$.
    For unbounded rewards, however, we can no longer use a constant number of bits in each round. Specifically, in round 1, it is not possible to obtain a high-probability guess for the mean by using a bounded number of bits. Nonetheless, we can use a different quantization scheme just for the first round to ensure that the quantization error is bounded. 
    The problem is then reduced to that with bounded rewards from round 2, whence we can use ICQ-SE as is. In Section \ref{sec: sims}, we demonstrate this on Gaussian rewards by running QuBan \cite{hanna21} (designed for unbounded rewards) in the first round. 
\end{remark}
	\section{Numerical experiments} \label{sec: sims}
In this section, we present results of numerical experiments comparing the performance of ICQ-SE with other quantization algorithms proposed for multi-armed bandits in the literature. In addition to the unquantized setting as a baseline, we compare ICQ-SE with the quantization schemes  QuBan \cite{hanna21} and Fed-SEL \cite{mitra_heterogeneity}. Each of these schemes is implemented on top of the same batched Successive Elimination algorithm that ICQ-SE uses to highlight the difference between the quantization schemes. The algorithms are compared based on their (expected) sample complexity $E\left [ \tau_{\delta} \right ]$, (expected) round complexity $E\left [ \tau_{r,\delta} \right ]$ and (expected) communication complexity $E\left [ B_{\delta} \right ]$. In all our experiments, the performance of each algorithm was averaged over 4000 iterations.

QuBan \cite{hanna21} was proposed for the regret minimization setting where each sample that the agent observes is quantized and sent to the learner. A key feature of QuBan is that the agent uses shorter codewords to quantize samples close to the current estimate of the mean at the learner, while reward samples which are farther away are assigned longer codewords. This helps to minimize the expected number of bits used at each round. While this is a sound approach, it could result in a higher number of bits being used unnecessarily for our framework. QuBan has a parameter $\epsilon > 0$ that provides a trade-off between the number of bits used and the performance of the algorithm (a smaller value of $\epsilon$ provides a smaller regret using a higher number of bits).

In Fed-SEL \cite{mitra_heterogeneity}, in each round $i$, the entire interval $[a,b]$ is divided into bins of length $U'(i,\delta)$ and the empirical mean is quantized to the midpoint of one of these bins. The main drawback of this approach is that the number of bits used at each round is inversely proportional to the confidence bound at each round, i.e., the number of bits used per round grows with the number of rounds, making it hard to control the cumulative number of bits that the algorithm uses. 

\ifreport
The first set of experiments in Figures \ref{fig: QSE_alpha_round}, \ref{fig: QSE_alpha_sample}, \ref{fig: QSE_alpha_bits}, \ref{fig: QSE_B_round} and \ref{fig: QSE_B_sample} analyze the dependence of the performance of ICQ-SE on the parameters $\alpha$ (controlling the sparsity of communication) and $B$ (the number of bits in each transmission). We consider a five-armed multi-armed bandit instance where each arm is associated with a $\mathrm{Beta}(\gamma,1-\gamma)$ distribution, with $\gamma$ generated uniformly at random from $[0,1]$. In Figures \ref{fig: QSE_alpha_round} and \ref{fig: QSE_alpha_sample}, we observe that the number of communication rounds used by ICQ-SE to converge decreases with $\alpha$ while the number of samples used increases with $\alpha$. This is expected because increasing $\alpha$ results in sparser communication between the agents and the learner reducing the round complexity while the number of samples used increases. Figure \ref{fig: QSE_alpha_bits} shows the dependence of the cumulative number of bits used by the algorithm to converge (which we call \textit{communication complexity}) on $\alpha$. The decrease in the communication complexity with $\alpha$ is a natural consequence of the decrease in the communication round complexity.
\fi

Figures \ref{fig: bounded_round}, \ref{fig: bounded_sample}, and \ref{fig: bounded_bits} compare the performance of ICQ-SE, QuBan and Fed-SEL. We consider a five-armed multi-armed bandit instance where each arm is associated with a bounded support reward distribution, in particular the $\mathrm{Beta}(\gamma,1-\gamma)$ distribution with $\gamma$ generated uniformly at random from $[0,1]$. We observe that ICQ-SE with $B=3$ performs comparably with QuBan ($\epsilon=0.5$), and better than QuBan ($\epsilon=2$) and Fed-SEL in terms of sample and round complexity, while using a much lesser number of bits than all of them.

In Figures \ref{fig: Gaussian_round}, \ref{fig: Gaussian_sample}, and \ref{fig: Gaussian_bits}, we compare the performance of ICQ-SE and QuBan when the reward distributions associated with the arms are Gaussian (and hence unbounded; recall Remark \ref{rem: unbounded}). We consider a five-armed multi-armed bandit instance where each arm is associated with a Gaussian reward distribution of standard deviation 0.125 whose means are generated uniformly at random from the interval $[0,N]$, where $N$ is a sample drawn from a Gaussian distribution $\mathcal{N}(0,9)$. We use QuBan with $\epsilon=2$ for the first round of ICQ-SE as discussed in Remark \ref{rem: unbounded}. We again observe that ICQ-SE with $B=3$ performs comparably with the others in terms of sample and round complexity while using a much lesser number of bits. We make no comparison with Fed-SEL in this case because it is unclear how to extend the scheme to unbounded rewards, since it starts by dividing the finite-size reward range into bins of length $U^\prime(i, \delta)$.

The final set of experiments in Figures \ref{fig: Hardness_round}, \ref{fig: Hardness_sample} and \ref{fig: Hardness_bits} compare the dependence of the performance of ICQ-SE and QuBan on the hardness of the underlying bandit instance when the  reward distributions associated with the arms are Gaussian. We consider a five-armed multi-armed bandit instance where each arm is associated with a Gaussian distribution of standard deviation 0.125. Four of the arms have mean 0 and the remaining arm has mean $\Delta \in [0,1]$. A lower $\Delta$ implies that the mean of the optimal arm is closer to that of the non-optimal arms, resulting in a harder instance. As expected, the performance of all the algorithms improves as the hardness of the instance decreases. Moreover, we see the same trend with ICQ-SE ($B=3$) as earlier, especially on the harder instances.

\ifreport 
Finally, in Figure \ref{fig: sims_ext}, we also numerically analyze the impact of varying $\alpha$ (controlling the sparsity of communication) and $B$ (the number of bits in each transmission) on the performance of ICQ-SE. 
\else
Finally, we also numerically analyze the impact of varying $\alpha$ (controlling the sparsity of communication) and $B$ (the number of bits in each transmission) on the performance of ICQ-SE. These results are reported in \cite{tech_report}, due to lack of space. 
\fi

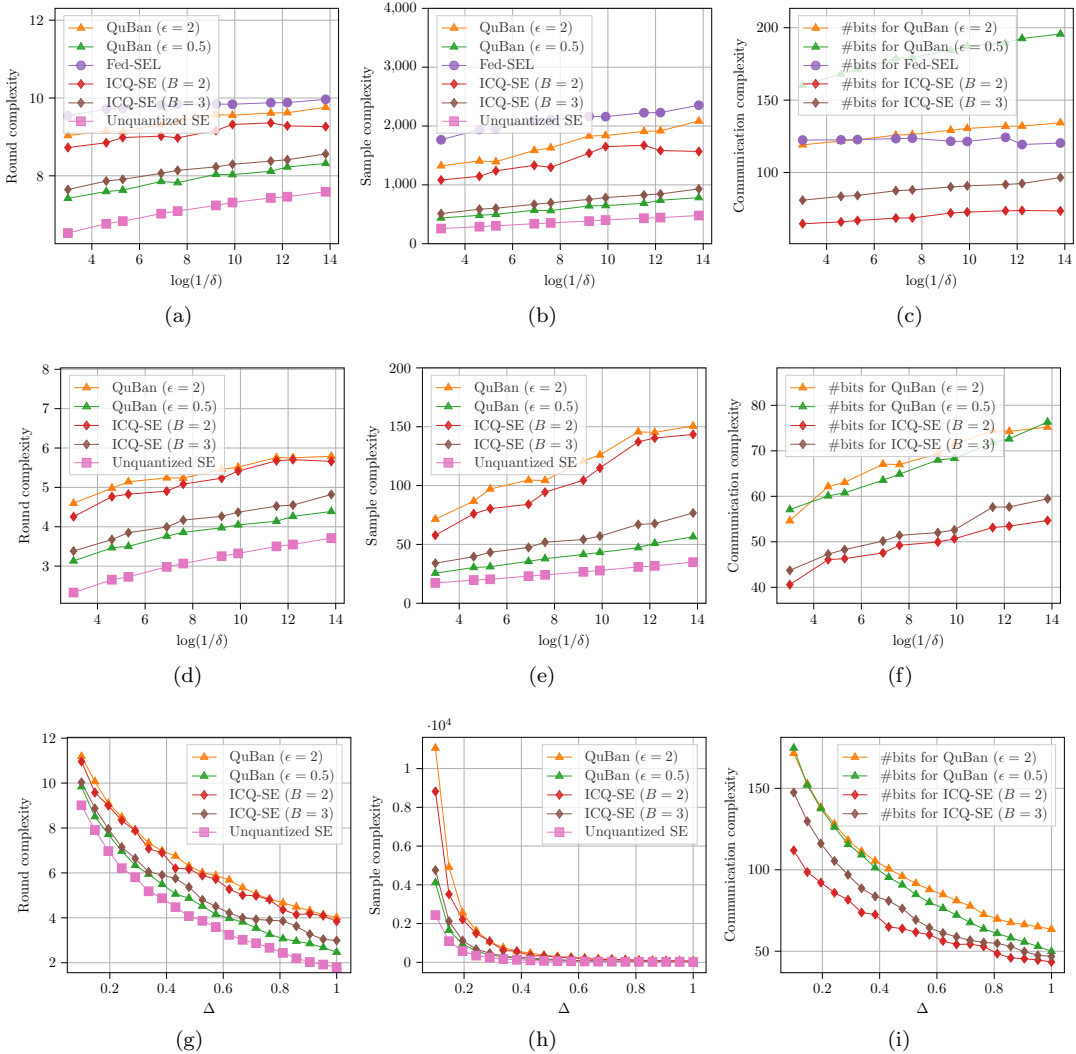
\begin{figure*}[!htbp]
	\centering
\subfloat[\label{fig: bounded_round}
]{\scalebox{0.55}{
\begin{tikzpicture}

\definecolor{crimson2143940}{RGB}{214,39,40}
\definecolor{darkgray176}{RGB}{176,176,176}
\definecolor{darkorange25512714}{RGB}{255,127,14}
\definecolor{forestgreen4416044}{RGB}{44,160,44}
\definecolor{lightgray204}{RGB}{204,204,204}
\definecolor{mediumpurple148103189}{RGB}{148,103,189}
\definecolor{orchid227119194}{RGB}{227,119,194}
\definecolor{sienna1408675}{RGB}{140,86,75}
\definecolor{steelblue31119180}{RGB}{31,119,180}

\begin{axis}[
legend cell align={left},
legend style={fill opacity=0.8, draw opacity=1, text opacity=1,   at={(0.03,0.97)},
  anchor=north west, draw=lightgray204},
tick align=outside,
tick pos=left,
x grid style={darkgray176},
xlabel={\(\displaystyle \log(1/\delta)\)},
xmajorgrids,
xmin=2.45474335933348, xmax=14.3564994721848,
xtick style={color=black},
y grid style={darkgray176},
ylabel={Round complexity},
ymajorgrids,
ymin=6.255, ymax=12.305,
ytick style={color=black}
]
\addplot [semithick, darkorange25512714, mark=triangle*, mark size=3, mark options={solid}]
table {%
13.8155105579643 9.75975
12.2060726455302 9.62225
11.5129254649702 9.6145
9.90348755253613 9.55825
9.21034037197618 9.57225
7.60090245954208 9.39075
6.90775527898214 9.32
5.29831736654804 9.1325
4.60517018598809 9.15625
2.99573227355399 9.03525
};
\addlegendentry{QuBan ($\epsilon=2$)}
\addplot [semithick, forestgreen4416044, mark=triangle*, mark size=3, mark options={solid}]
table {%
13.8155105579643 8.31825
12.2060726455302 8.22725
11.5129254649702 8.118
9.90348755253613 8.03
9.21034037197618 8.03825
7.60090245954208 7.82425
6.90775527898214 7.85525
5.29831736654804 7.63075
4.60517018598809 7.5965
2.99573227355399 7.42075
};
\addlegendentry{QuBan ($\epsilon=0.5$)}
\addplot [semithick, mediumpurple148103189, mark=*, mark size=3, mark options={solid}]
table {%
13.8155105579643 9.96575
12.2060726455302 9.88025
11.5129254649702 9.87875
9.90348755253613 9.84025
9.21034037197618 9.844
7.60090245954208 9.838
6.90775527898214 9.8155
5.29831736654804 9.72525
4.60517018598809 9.72
2.99573227355399 9.5415
};
\addlegendentry{Fed-SEL}
\addplot [semithick, crimson2143940, mark=diamond*, mark size=3, mark options={solid}]
table {%
13.8155105579643 9.2645
12.2060726455302 9.284
11.5129254649702 9.361
9.90348755253613 9.32
9.21034037197618 9.16575
7.60090245954208 8.976
6.90775527898214 9.02075
5.29831736654804 8.98325
4.60517018598809 8.85
2.99573227355399 8.72675
};
\addlegendentry{ICQ-SE ($B=2$)}
\addplot [semithick, sienna1408675, mark=diamond*, mark size=3, mark options={solid}]
table {%
13.8155105579643 8.5625
12.2060726455302 8.41325
11.5129254649702 8.37775
9.90348755253613 8.29675
9.21034037197618 8.22975
7.60090245954208 8.14075
6.90775527898214 8.06475
5.29831736654804 7.9095
4.60517018598809 7.86675
2.99573227355399 7.64575
};
\addlegendentry{ICQ-SE ($B=3$)}
\addplot [semithick, orchid227119194, mark=square*, mark size=3, mark options={solid}]
table {%
13.8155105579643 7.58975
12.2060726455302 7.4635
11.5129254649702 7.431
9.90348755253613 7.3145
9.21034037197618 7.2395
7.60090245954208 7.0935
6.90775527898214 7.028
5.29831736654804 6.83575
4.60517018598809 6.76775
2.99573227355399 6.53
};
\addlegendentry{Unquantized SE}
\end{axis}

\end{tikzpicture}} }
\subfloat[\label{fig: bounded_sample}
]{\scalebox{0.55}{
\begin{tikzpicture}

\definecolor{crimson2143940}{RGB}{214,39,40}
\definecolor{darkgray176}{RGB}{176,176,176}
\definecolor{darkorange25512714}{RGB}{255,127,14}
\definecolor{forestgreen4416044}{RGB}{44,160,44}
\definecolor{lightgray204}{RGB}{204,204,204}
\definecolor{mediumpurple148103189}{RGB}{148,103,189}
\definecolor{orchid227119194}{RGB}{227,119,194}
\definecolor{sienna1408675}{RGB}{140,86,75}
\definecolor{steelblue31119180}{RGB}{31,119,180}

\begin{axis}[
legend cell align={left},
legend style={
  fill opacity=0.8,
  draw opacity=1,
  text opacity=1,
  at={(0.03,0.97)},
  anchor=north west,
  draw=lightgray204
},
tick align=outside,
tick pos=left,
x grid style={darkgray176},
xlabel={\(\displaystyle \log(1/\delta)\)},
xmajorgrids,
xmin=2.45474335933348, xmax=14.3564994721848,
xtick style={color=black},
y grid style={darkgray176},
ylabel={Sample complexity},
ymajorgrids,
ymin=0, ymax=4000,
ytick style={color=black}
]
\addplot [semithick, darkorange25512714, mark=triangle*, mark size=3, mark options={solid}]
table {%
13.8155105579643 2084.842
12.2060726455302 1918.098
11.5129254649702 1908.736
9.90348755253613 1838.064
9.21034037197618 1830.6
7.60090245954208 1629.33
6.90775527898214 1585.106
5.29831736654804 1392.763
4.60517018598809 1406.082
2.99573227355399 1323.395
};
\addlegendentry{QuBan ($\epsilon=2$)}
\addplot [semithick, forestgreen4416044, mark=triangle*, mark size=3, mark options={solid}]
table {%
13.8155105579643 783.388
12.2060726455302 738.821
11.5129254649702 686.525
9.90348755253613 648.305
9.21034037197618 642.994
7.60090245954208 562.339
6.90775527898214 568.721
5.29831736654804 498.9815
4.60517018598809 482.659
2.99573227355399 437.4165
};
\addlegendentry{QuBan ($\epsilon=0.5$)}
\addplot [semithick, mediumpurple148103189, mark=*, mark size=3, mark options={solid}]
table {%
13.8155105579643 2353.136
12.2060726455302 2228.288
11.5129254649702 2226.156
9.90348755253613 2159.604
9.21034037197618 2163.816
7.60090245954208 2112.116
6.90775527898214 2088.108
5.29831736654804 1948.092
4.60517018598809 1941.088
2.99573227355399 1766.436
};
\addlegendentry{Fed-SEL}
\addplot [semithick, crimson2143940, mark=diamond*, mark size=3, mark options={solid}]
table {%
13.8155105579643 1568.174
12.2060726455302 1583.18
11.5129254649702 1672.52
9.90348755253613 1650.118
9.21034037197618 1538.44
7.60090245954208 1298.132
6.90775527898214 1331.644
5.29831736654804 1239.552
4.60517018598809 1146.056
2.99573227355399 1082.466
};
\addlegendentry{ICQ-SE ($B=2$)}
\addplot [semithick, sienna1408675, mark=diamond*, mark size=3, mark options={solid}]
table {%
13.8155105579643 931.158
12.2060726455302 845.387
11.5129254649702 827.714
9.90348755253613 783.5
9.21034037197618 752.619
7.60090245954208 696.138
6.90775527898214 669.769
5.29831736654804 602.854
4.60517018598809 586.576
2.99573227355399 510.795
};
\addlegendentry{ICQ-SE ($B=3$)}
\addplot [semithick, orchid227119194, mark=square*, mark size=3, mark options={solid}]
table {%
13.8155105579643 479.659
12.2060726455302 443.797
11.5129254649702 433.398
9.90348755253613 402.524
9.21034037197618 384.673
7.60090245954208 353.491
6.90775527898214 340.097
5.29831736654804 304.006
4.60517018598809 289.719
2.99573227355399 257.9615
};
\addlegendentry{Unquantized SE}
\end{axis}

\end{tikzpicture}} }
\subfloat[\label{fig: bounded_bits}
]{\scalebox{0.55}{
\begin{tikzpicture}

\definecolor{crimson2143940}{RGB}{214,39,40}
\definecolor{darkgray176}{RGB}{176,176,176}
\definecolor{darkorange25512714}{RGB}{255,127,14}
\definecolor{forestgreen4416044}{RGB}{44,160,44}
\definecolor{lightgray204}{RGB}{204,204,204}
\definecolor{mediumpurple148103189}{RGB}{148,103,189}
\definecolor{sienna1408675}{RGB}{140,86,75}
\definecolor{steelblue31119180}{RGB}{31,119,180}

\begin{axis}[
legend cell align={left},
legend style={
  fill opacity=0.8,
  draw opacity=1,
  text opacity=1,
  at={(0.03,0.97)},
  anchor=north west,
  draw=lightgray204
},
tick align=outside,
tick pos=left,
x grid style={darkgray176},
xlabel={\(\displaystyle \log(1/\delta)\)},
xmajorgrids,
xmin=2.45474335933348, xmax=14.3564994721848,
xtick style={color=black},
y grid style={darkgray176},
ylabel={Communication complexity},
ymajorgrids,
ymin=50.8394, ymax=213.3736,
ytick style={color=black}
]
\addplot [semithick, darkorange25512714, mark=triangle*, mark size=3, mark options={solid}]
table {%
13.8155105579643 134.36475
12.2060726455302 132.0435
11.5129254649702 131.96075
9.90348755253613 130.45975
9.21034037197618 129.21975
7.60090245954208 126.294
6.90775527898214 125.79325
5.29831736654804 122.64975
4.60517018598809 121.739
2.99573227355399 119.11325
};
\addlegendentry{\#bits for QuBan ($\epsilon=2$)}
\addplot [semithick, forestgreen4416044, mark=triangle*, mark size=3, mark options={solid}]
table {%
13.8155105579643 195.6675
12.2060726455302 192.53025
11.5129254649702 189.53775
9.90348755253613 187.21475
9.21034037197618 184.38825
7.60090245954208 179.38375
6.90775527898214 178.81025
5.29831736654804 171.191
4.60517018598809 168.0235
2.99573227355399 160.17425
};
\addlegendentry{\#bits for QuBan ($\epsilon=0.5$)}
\addplot [semithick, mediumpurple148103189, mark=*, mark size=3, mark options={solid}]
table {%
13.8155105579643 120.37925
12.2060726455302 119.34625
11.5129254649702 124.2875
9.90348755253613 121.473
9.21034037197618 121.58575
7.60090245954208 123.687
6.90775527898214 123.4145
5.29831736654804 122.71725
4.60517018598809 122.56325
2.99573227355399 122.407
};
\addlegendentry{\#bits for Fed-SEL}
\addplot [semithick, crimson2143940, mark=diamond*, mark size=3, mark options={solid}]
table {%
13.8155105579643 73.496
12.2060726455302 73.7455
11.5129254649702 73.5835
9.90348755253613 72.6885
9.21034037197618 72.037
7.60090245954208 68.631
6.90775527898214 68.525
5.29831736654804 66.862
4.60517018598809 65.897
2.99573227355399 64.601
};
\addlegendentry{\#bits for ICQ-SE ($B=2$)}
\addplot [semithick, sienna1408675, mark=diamond*, mark size=3, mark options={solid}]
table {%
13.8155105579643 96.555
12.2060726455302 92.39625
11.5129254649702 91.74
9.90348755253613 90.7935
9.21034037197618 90.03
7.60090245954208 87.99075
6.90775527898214 87.4455
5.29831736654804 84.14775
4.60517018598809 83.54625
2.99573227355399 80.87925
};
\addlegendentry{\#bits for ICQ-SE ($B=3$)}
\end{axis}

\end{tikzpicture}} }

\subfloat[\label{fig: Gaussian_round}
]{\scalebox{0.55}{
\begin{tikzpicture}

\definecolor{crimson2143940}{RGB}{214,39,40}
\definecolor{darkgray176}{RGB}{176,176,176}
\definecolor{darkorange25512714}{RGB}{255,127,14}
\definecolor{forestgreen4416044}{RGB}{44,160,44}
\definecolor{lightgray204}{RGB}{204,204,204}
\definecolor{mediumpurple148103189}{RGB}{148,103,189}
\definecolor{sienna1408675}{RGB}{140,86,75}
\definecolor{steelblue31119180}{RGB}{31,119,180}
\definecolor{orchid227119194}{RGB}{227,119,194}
\begin{axis}[
legend cell align={left},
legend style={
  fill opacity=0.8,
  draw opacity=1,
  text opacity=1,
  at={(0.03,0.97)},
  anchor=north west,
  draw=lightgray204
},
tick align=outside,
tick pos=left,
x grid style={darkgray176},
xlabel={\(\displaystyle \log(1/\delta)\)},
xmajorgrids,
xmin=2.45474335933348, xmax=14.3564994721848,
xtick style={color=black},
y grid style={darkgray176},
ylabel={Round complexity},
ymajorgrids,
ymin=2.0567375, ymax=8.0355125,
ytick style={color=black}
]
\addplot [semithick, darkorange25512714, mark=triangle*, mark size=3, mark options={solid}]
table {%
13.8155105579643 5.791
12.2060726455302 5.75125
11.5129254649702 5.758
9.90348755253613 5.51325
9.21034037197618 5.45775
7.60090245954208 5.23675
6.90775527898214 5.242
5.29831736654804 5.143
4.60517018598809 4.98175
2.99573227355399 4.596
};
\addlegendentry{QuBan ($\epsilon=2$)}
\addplot [semithick, forestgreen4416044, mark=triangle*, mark size=3, mark options={solid}]
table {%
13.8155105579643 4.38825
12.2060726455302 4.2645
11.5129254649702 4.1405
9.90348755253613 4.047
9.21034037197618 3.97375
7.60090245954208 3.85725
6.90775527898214 3.76275
5.29831736654804 3.5005
4.60517018598809 3.4665
2.99573227355399 3.13125
};
\addlegendentry{QuBan ($\epsilon=0.5$)}
\addplot [semithick, crimson2143940, mark=diamond*, mark size=3, mark options={solid}]
table {%
13.8155105579643 5.6605
12.2060726455302 5.70475
11.5129254649702 5.6755
9.90348755253613 5.41375
9.21034037197618 5.233
7.60090245954208 5.08275
6.90775527898214 4.90475
5.29831736654804 4.828
4.60517018598809 4.764
2.99573227355399 4.25525
};
\addlegendentry{ICQ-SE ($B=2$)}
\addplot [semithick, sienna1408675, mark=diamond*, mark size=3, mark options={solid}]
table {%
13.8155105579643 4.82025
12.2060726455302 4.5505
11.5129254649702 4.525
9.90348755253613 4.36875
9.21034037197618 4.2655
7.60090245954208 4.1695
6.90775527898214 3.99325
5.29831736654804 3.84925
4.60517018598809 3.67475
2.99573227355399 3.38325
};
\addlegendentry{ICQ-SE ($B=3$)}
\addplot [semithick, orchid227119194, mark=square*, mark size=3, mark options={solid}]
table {%
13.8155105579643 3.713
12.2060726455302 3.55025
11.5129254649702 3.502
9.90348755253613 3.3235
9.21034037197618 3.25175
7.60090245954208 3.066
6.90775527898214 2.98075
5.29831736654804 2.72775
4.60517018598809 2.655
2.99573227355399 2.3285
};
\addlegendentry{Unquantized SE}
\end{axis}

\end{tikzpicture}} }
\subfloat[\label{fig: Gaussian_sample}
]{\scalebox{0.55}{
\begin{tikzpicture}

\definecolor{crimson2143940}{RGB}{214,39,40}
\definecolor{darkgray176}{RGB}{176,176,176}
\definecolor{darkorange25512714}{RGB}{255,127,14}
\definecolor{forestgreen4416044}{RGB}{44,160,44}
\definecolor{lightgray204}{RGB}{204,204,204}
\definecolor{mediumpurple148103189}{RGB}{148,103,189}
\definecolor{sienna1408675}{RGB}{140,86,75}
\definecolor{steelblue31119180}{RGB}{31,119,180}
\definecolor{orchid227119194}{RGB}{227,119,194}
\begin{axis}[
legend cell align={left},
legend style={
  fill opacity=0.8,
  draw opacity=1,
  text opacity=1,
  at={(0.03,0.97)},
  anchor=north west,
  draw=lightgray204
},
tick align=outside,
tick pos=left,
x grid style={darkgray176},
xlabel={\(\displaystyle \log(1/\delta)\)},
xmajorgrids,
xmin=2.45474335933348, xmax=14.3564994721848,
xtick style={color=black},
y grid style={darkgray176},
ylabel={Sample complexity},
ymajorgrids,
ymin=0, ymax=200,
ytick style={color=black}
]
\addplot [semithick, darkorange25512714, mark=triangle*, mark size=3, mark options={solid}]
table {%
13.8155105579643 150.565
12.2060726455302 145.2515
11.5129254649702 145.618
9.90348755253613 126.0445
9.21034037197618 120.86
7.60090245954208 104.297
6.90775527898214 104.659
5.29831736654804 97.097
4.60517018598809 86.762
2.99573227355399 71.4625
};
\addlegendentry{QuBan ($\epsilon=2$)}
\addplot [semithick, forestgreen4416044, mark=triangle*, mark size=3, mark options={solid}]
table {%
13.8155105579643 56.5205
12.2060726455302 50.9045
11.5129254649702 47.234
9.90348755253613 43.198
9.21034037197618 41.4765
7.60090245954208 37.8425
6.90775527898214 35.627
5.29831736654804 31.046
4.60517018598809 30.3745
2.99573227355399 25.584
};
\addlegendentry{QuBan ($\epsilon=0.5$)}
\addplot [semithick, crimson2143940, mark=diamond*, mark size=3, mark options={solid}]
table {%
13.8155105579643 143.464
12.2060726455302 140.384
11.5129254649702 137.189
9.90348755253613 114.9105
9.21034037197618 104.517
7.60090245954208 94.405
6.90775527898214 84.092
5.29831736654804 80.3715
4.60517018598809 76.012
2.99573227355399 57.8125
};
\addlegendentry{ICQ-SE ($B=2$)}
\addplot [semithick, sienna1408675, mark=diamond*, mark size=3, mark options={solid}]
table {%
13.8155105579643 76.694
12.2060726455302 67.685
11.5129254649702 66.952
9.90348755253613 57.033
9.21034037197618 54.178
7.60090245954208 51.8925
6.90775527898214 47.4045
5.29831736654804 43.282
4.60517018598809 39.5625
2.99573227355399 34.0285
};
\addlegendentry{ICQ-SE ($B=3$)}
\addplot [semithick, orchid227119194, mark=square*, mark size=3, mark options={solid}]
table {%
13.8155105579643 34.903
12.2060726455302 31.7625
11.5129254649702 30.8815
9.90348755253613 27.842
9.21034037197618 26.688
7.60090245954208 24.1115
6.90775527898214 23.056
5.29831736654804 20.3675
4.60517018598809 19.7415
2.99573227355399 17.2445
};
\addlegendentry{Unquantized SE}
\end{axis}

\end{tikzpicture}} }
\subfloat[\label{fig: Gaussian_bits}
]{\scalebox{0.55}{
\begin{tikzpicture}

\definecolor{crimson2143940}{RGB}{214,39,40}
\definecolor{darkgray176}{RGB}{176,176,176}
\definecolor{darkorange25512714}{RGB}{255,127,14}
\definecolor{forestgreen4416044}{RGB}{44,160,44}
\definecolor{lightgray204}{RGB}{204,204,204}
\definecolor{mediumpurple148103189}{RGB}{148,103,189}
\definecolor{sienna1408675}{RGB}{140,86,75}
\definecolor{steelblue31119180}{RGB}{31,119,180}

\begin{axis}[
legend cell align={left},
legend style={
  fill opacity=0.8,
  draw opacity=1,
  text opacity=1,
  at={(0.03,0.97)},
  anchor=north west,
  draw=lightgray204
},
tick align=outside,
tick pos=left,
x grid style={darkgray176},
xlabel={\(\displaystyle \log(1/\delta)\)},
xmajorgrids,
xmin=2.45474335933348, xmax=14.3564994721848,
xtick style={color=black},
y grid style={darkgray176},
ylabel={Communication complexity},
ymajorgrids,
ymin=36.4946875, ymax=88.2245625,
ytick style={color=black}
]
\addplot [semithick, darkorange25512714, mark=triangle*, mark size=3, mark options={solid}]
table {%
13.8155105579643 75.22775
12.2060726455302 74.275
11.5129254649702 74.40025
9.90348755253613 71.40525
9.21034037197618 69.36725
7.60090245954208 66.98225
6.90775527898214 67.06375
5.29831736654804 63.058
4.60517018598809 62.1355
2.99573227355399 54.6225
};
\addlegendentry{\#bits for QuBan ($\epsilon=2$)}
\addplot [semithick, forestgreen4416044, mark=triangle*, mark size=3, mark options={solid}]
table {%
13.8155105579643 76.32775
12.2060726455302 72.6165
11.5129254649702 71.81
9.90348755253613 68.33175
9.21034037197618 67.982
7.60090245954208 64.89
6.90775527898214 63.53625
5.29831736654804 60.76525
4.60517018598809 60.091
2.99573227355399 57.082
};
\addlegendentry{\#bits for QuBan ($\epsilon=0.5$)}
\addplot [semithick, crimson2143940, mark=diamond*, mark size=3, mark options={solid}]
table {%
13.8155105579643 54.68425
12.2060726455302 53.444
11.5129254649702 53.129
9.90348755253613 50.67775
9.21034037197618 49.954
7.60090245954208 49.23425
6.90775527898214 47.5585
5.29831736654804 46.3055
4.60517018598809 46.08575
2.99573227355399 40.581
};
\addlegendentry{\#bits for ICQ-SE ($B=2$)}
\addplot [semithick, sienna1408675, mark=diamond*, mark size=3, mark options={solid}]
table {%
13.8155105579643 59.44425
12.2060726455302 57.6635
11.5129254649702 57.6195
9.90348755253613 52.60125
9.21034037197618 52.02
7.60090245954208 51.422
6.90775527898214 50.1515
5.29831736654804 48.3005
4.60517018598809 47.2865
2.99573227355399 43.722
};
\addlegendentry{\#bits for ICQ-SE ($B=3$)}
\end{axis}

\end{tikzpicture}} }

\subfloat[\label{fig: Hardness_round}
]{\scalebox{0.55}{\begin{tikzpicture}

\definecolor{crimson2143940}{RGB}{214,39,40}
\definecolor{darkgray176}{RGB}{176,176,176}
\definecolor{darkorange25512714}{RGB}{255,127,14}
\definecolor{forestgreen4416044}{RGB}{44,160,44}
\definecolor{lightgray204}{RGB}{204,204,204}
\definecolor{mediumpurple148103189}{RGB}{148,103,189}
\definecolor{sienna1408675}{RGB}{140,86,75}
\definecolor{steelblue31119180}{RGB}{31,119,180}
\definecolor{orchid227119194}{RGB}{227,119,194}
\begin{axis}[
legend cell align={left},
legend style={
  fill opacity=0.8,
  draw opacity=1,
  text opacity=1,
  at={(0.97,0.97)},
  anchor=north east,
  draw=lightgray204
},
tick align=outside,
tick pos=left,
x grid style={darkgray176},
xlabel={\(\displaystyle \Delta\)},
xmajorgrids,
xmin=0.05, xmax=1.05,
xtick style={color=black},
y grid style={darkgray176},
ylabel={Round complexity},
ymajorgrids,
ymin=1.5567375, ymax=12.0355125,
ytick style={color=black}
]
\addplot [semithick, darkorange25512714, mark=triangle*, mark size=3, mark options={solid}]
table {%
0.1 11.18775
0.147368421052632 10.06525
0.194736842105263 9.09675
0.242105263157895 8.486
0.289473684210526 7.91675
0.336842105263158 7.331
0.384210526315789 6.98075
0.431578947368421 6.74875
0.478947368421053 6.3075
0.526315789473684 6.017
0.573684210526316 5.88275
0.621052631578947 5.683
0.668421052631579 5.34875
0.715789473684211 5.07075
0.763157894736842 4.83325
0.810526315789474 4.66975
0.857894736842105 4.48625
0.905263157894737 4.3195
0.952631578947368 4.139
1 4.019
};
\addlegendentry{QuBan ($\epsilon=2$)}
\addplot [semithick, forestgreen4416044, mark=triangle*, mark size=3, mark options={solid}]
table {%
0.1 9.83725
0.147368421052632 8.50425
0.194736842105263 7.717
0.242105263157895 6.9705
0.289473684210526 6.33825
0.336842105263158 5.94775
0.384210526315789 5.49475
0.431578947368421 5.05975
0.478947368421053 4.87625
0.526315789473684 4.5235
0.573684210526316 4.159
0.621052631578947 3.97675
0.668421052631579 3.8175
0.715789473684211 3.543
0.763157894736842 3.26225
0.810526315789474 3.072
0.857894736842105 2.95875
0.905263157894737 2.85025
0.952631578947368 2.6795
1 2.47175
};
\addlegendentry{QuBan ($\epsilon=0.5$)}
\addplot [semithick, crimson2143940, mark=diamond*, mark size=3, mark options={solid}]
table {%
0.1 10.96225
0.147368421052632 9.57725
0.194736842105263 8.99425
0.242105263157895 8.33175
0.289473684210526 7.88025
0.336842105263158 7.0795
0.384210526315789 6.8955
0.431578947368421 6.20725
0.478947368421053 6.1725
0.526315789473684 5.8825
0.573684210526316 5.725
0.621052631578947 5.2755
0.668421052631579 5.009
0.715789473684211 4.9785
0.763157894736842 4.81225
0.810526315789474 4.3615
0.857894736842105 4.147
0.905263157894737 4.178
0.952631578947368 4.0935
1 3.84325
};
\addlegendentry{ICQ-SE ($B=2$)}
\addplot [semithick, sienna1408675, mark=diamond*, mark size=3, mark options={solid}]
table {%
0.1 10.03325
0.147368421052632 8.872
0.194736842105263 7.95675
0.242105263157895 7.151
0.289473684210526 6.64425
0.336842105263158 6.04775
0.384210526315789 5.90625
0.431578947368421 5.75275
0.478947368421053 5.37125
0.526315789473684 4.802
0.573684210526316 4.5025
0.621052631578947 4.216
0.668421052631579 4.0015
0.715789473684211 3.929
0.763157894736842 3.8945
0.810526315789474 3.8585
0.857894736842105 3.6265
0.905263157894737 3.2715
0.952631578947368 3.046
1 2.99075
};
\addlegendentry{ICQ-SE ($B=3$)}
\addplot [semithick, orchid227119194, mark=square*, mark size=3, mark options={solid}]
table {%
0.1 9.009
0.147368421052632 7.90725
0.194736842105263 6.9705
0.242105263157895 6.21175
0.289473684210526 5.80425
0.336842105263158 5.18225
0.384210526315789 4.8715
0.431578947368421 4.4725
0.478947368421053 4.0775
0.526315789473684 3.867
0.573684210526316 3.59125
0.621052631578947 3.24275
0.668421052631579 3.02175
0.715789473684211 2.87125
0.763157894736842 2.66525
0.810526315789474 2.44025
0.857894736842105 2.198
0.905263157894737 2.033
0.952631578947368 1.92075
1 1.80475
};
\addlegendentry{Unquantized SE}
\end{axis}

\end{tikzpicture}} }
\subfloat[\label{fig: Hardness_sample}
]{\scalebox{0.55}{
\begin{tikzpicture}

\definecolor{crimson2143940}{RGB}{214,39,40}
\definecolor{darkgray176}{RGB}{176,176,176}
\definecolor{darkorange25512714}{RGB}{255,127,14}
\definecolor{forestgreen4416044}{RGB}{44,160,44}
\definecolor{lightgray204}{RGB}{204,204,204}
\definecolor{mediumpurple148103189}{RGB}{148,103,189}
\definecolor{sienna1408675}{RGB}{140,86,75}
\definecolor{steelblue31119180}{RGB}{31,119,180}
\definecolor{orchid227119194}{RGB}{227,119,194}
\begin{axis}[
legend cell align={left},
legend style={
  fill opacity=0.8,
  draw opacity=1,
  text opacity=1,
  at={(0.97,0.97)},
  anchor=north east,
  draw=lightgray204
},
tick align=outside,
tick pos=left,
x grid style={darkgray176},
xlabel={\(\displaystyle \Delta\)},
xmajorgrids,
xmin=0.055, xmax=1.045,
xtick style={color=black},
y grid style={darkgray176},
ylabel={Sample complexity},
ymajorgrids,
ymin=-536.60525, ymax=11612.85625,
ytick style={color=black}
]
\addplot [semithick, darkorange25512714, mark=triangle*, mark size=3, mark options={solid}]
table {%
0.1 11060.608
0.147368421052632 4913.024
0.194736842105263 2560.64
0.242105263157895 1637.76
0.289473684210526 1065.936
0.336842105263158 754.408
0.384210526315789 579.056
0.431578947368421 473.672
0.478947368421053 370.652
0.526315789473684 301.656
0.573684210526316 260.66
0.621052631578947 226.313
0.668421052631579 185.306
0.715789473684211 157.066
0.763157894736842 127.669
0.810526315789474 112.088
0.857894736842105 100.229
0.905263157894737 92.113
0.952631578947368 83.049
1 75.969
};
\addlegendentry{QuBan ($\epsilon=2$)}
\addplot [semithick, forestgreen4416044, mark=triangle*, mark size=3, mark options={solid}]
table {%
0.1 4113.472
0.147368421052632 1657.824
0.194736842105263 940.048
0.242105263157895 583.128
0.289473684210526 376.012
0.336842105263158 285.412
0.384210526315789 205.744
0.431578947368421 157.624
0.478947368421053 133.486
0.526315789473684 104.007
0.573684210526316 83.829
0.621052631578947 72.886
0.668421052631579 63.403
0.715789473684211 52.793
0.763157894736842 44.4865
0.810526315789474 39.4185
0.857894736842105 35.783
0.905263157894737 32.3455
0.952631578947368 28.7635
1 25.2835
};
\addlegendentry{QuBan ($\epsilon=0.5$)}
\addplot [semithick, crimson2143940, mark=diamond*, mark size=3, mark options={solid}]
table {%
0.1 8820.096
0.147368421052632 3513.056
0.194736842105263 2212.432
0.242105263157895 1495.328
0.289473684210526 1085.84
0.336842105263158 614.4
0.384210526315789 566.692
0.431578947368421 350.752
0.478947368421053 331.212
0.526315789473684 263.998
0.573684210526316 238.217
0.621052631578947 181.273
0.668421052631579 146.831
0.715789473684211 146.11
0.763157894736842 133.513
0.810526315789474 98.291
0.857894736842105 77.916
0.905263157894737 75.095
0.952631578947368 69.496
1 59.252
};
\addlegendentry{ICQ-SE ($B=2$)}
\addplot [semithick, sienna1408675, mark=diamond*, mark size=3, mark options={solid}]
table {%
0.1 4760.64
0.147368421052632 2123.232
0.194736842105263 1123.92
0.242105263157895 678.608
0.289473684210526 469.512
0.336842105263158 310.932
0.384210526315789 252.772
0.431578947368421 224.598
0.478947368421053 183.142
0.526315789473684 130.646
0.573684210526316 103.355
0.621052631578947 86.049
0.668421052631579 74.602
0.715789473684211 67.492
0.763157894736842 62.962
0.810526315789474 60.771
0.857894736842105 54.743
0.905263157894737 45.63
0.952631578947368 39.054
1 36.843
};
\addlegendentry{ICQ-SE ($B=3$)}
\addplot [semithick, orchid227119194, mark=square*, mark size=3, mark options={solid}]
table {%
0.1 2438.432
0.147368421052632 1091.616
0.194736842105263 581.984
0.242105263157895 345.524
0.289473684210526 249.616
0.336842105263158 168.794
0.384210526315789 132.404
0.431578947368421 100.106
0.478947368421053 78.305
0.526315789473684 65.7985
0.573684210526316 53.802
0.621052631578947 43.5865
0.668421052631579 37.2605
0.715789473684211 32.878
0.763157894736842 28.363
0.810526315789474 24.481
0.857894736842105 21.0305
0.905263157894737 18.8195
0.952631578947368 17.1225
1 15.643
};
\addlegendentry{Unquantized SE}
\end{axis}

\end{tikzpicture}} }
\subfloat[\label{fig: Hardness_bits}
]{\scalebox{0.55}{
\begin{tikzpicture}

\definecolor{crimson2143940}{RGB}{214,39,40}
\definecolor{darkgray176}{RGB}{176,176,176}
\definecolor{darkorange25512714}{RGB}{255,127,14}
\definecolor{forestgreen4416044}{RGB}{44,160,44}
\definecolor{lightgray204}{RGB}{204,204,204}
\definecolor{mediumpurple148103189}{RGB}{148,103,189}
\definecolor{sienna1408675}{RGB}{140,86,75}
\definecolor{steelblue31119180}{RGB}{31,119,180}

\begin{axis}[
legend cell align={left},
legend style={
  fill opacity=0.8,
  draw opacity=1,
  text opacity=1,
  at={(0.97,0.97)},
  anchor=north east,
  draw=lightgray204
},
tick align=outside,
tick pos=left,
x grid style={darkgray176},
xlabel={\(\displaystyle \Delta\)},
xmajorgrids,
xmin=0.055, xmax=1.045,
xtick style={color=black},
y grid style={darkgray176},
ylabel={Communication complexity},
ymajorgrids,
ymin=36.8123875, ymax=181.3163625,
ytick style={color=black}
]
\addplot [semithick, darkorange25512714, mark=triangle*, mark size=3, mark options={solid}]
table {%
0.1 171.537
0.147368421052632 152.8695
0.194736842105263 138.43375
0.242105263157895 128.1985
0.289473684210526 118.213
0.336842105263158 111.229
0.384210526315789 105.46025
0.431578947368421 100.609
0.478947368421053 96.071
0.526315789473684 91.6845
0.573684210526316 87.88775
0.621052631578947 84.81175
0.668421052631579 81.0675
0.715789473684211 77.79875
0.763157894736842 72.723
0.810526315789474 69.768
0.857894736842105 67.59925
0.905263157894737 66.40175
0.952631578947368 65.05275
1 63.5145
};
\addlegendentry{\#bits for QuBan ($\epsilon=2$)}
\addplot [semithick, forestgreen4416044, mark=triangle*, mark size=3, mark options={solid}]
table {%
0.1 174.748
0.147368421052632 152.0115
0.194736842105263 137.542
0.242105263157895 126.249
0.289473684210526 115.61625
0.336842105263158 109.1935
0.384210526315789 101.2745
0.431578947368421 95.387
0.478947368421053 90.742
0.526315789473684 84.85425
0.573684210526316 79.96325
0.621052631578947 76.38425
0.668421052631579 72.21825
0.715789473684211 67.578
0.763157894736842 63.7065
0.810526315789474 60.902
0.857894736842105 58.238
0.905263157894737 55.669
0.952631578947368 52.9635
1 49.937
};
\addlegendentry{\#bits for QuBan ($\epsilon=0.5$)}
\addplot [semithick, crimson2143940, mark=diamond*, mark size=3, mark options={solid}]
table {%
0.1 111.9525
0.147368421052632 98.6265
0.194736842105263 92.1725
0.242105263157895 85.923
0.289473684210526 81.62075
0.336842105263158 73.7805
0.384210526315789 72.4985
0.431578947368421 64.914
0.478947368421053 63.89375
0.526315789473684 61.7135
0.573684210526316 60.14825
0.621052631578947 56.2895
0.668421052631579 54.16475
0.715789473684211 54.39475
0.763157894736842 52.95975
0.810526315789474 48.55425
0.857894736842105 45.92975
0.905263157894737 45.37875
0.952631578947368 44.62325
1 43.38075
};
\addlegendentry{\#bits for ICQ-SE ($B=2$)}
\addplot [semithick, sienna1408675, mark=diamond*, mark size=3, mark options={solid}]
table {%
0.1 147.47325
0.147368421052632 129.7515
0.194736842105263 116.1495
0.242105263157895 105.42225
0.289473684210526 96.9465
0.336842105263158 88.617
0.384210526315789 83.6845
0.431578947368421 80.93175
0.478947368421053 76.204
0.526315789473684 69.282
0.573684210526316 64.45725
0.621052631578947 61.1805
0.668421052631579 58.82025
0.715789473684211 56.72825
0.763157894736842 55.39025
0.810526315789474 54.85875
0.857894736842105 52.8935
0.905263157894737 49.7935
0.952631578947368 47.506
1 46.8295
};
\addlegendentry{\#bits for ICQ-SE ($B=3$)}
\end{axis}

\end{tikzpicture}} }
	\caption{\ifreport Figures \ref{fig: QSE_alpha_round}, \ref{fig: QSE_alpha_sample} and \ref{fig: QSE_alpha_bits} demonstrate the dependence of the performance of ICQ-SE on $\alpha$. Figures \ref{fig: QSE_B_round} and \ref{fig: QSE_B_sample} demonstrate the dependence on $\beta$. \fi
    Figures \ref{fig: bounded_round}, \ref{fig: bounded_sample} and \ref{fig: bounded_bits} compare ICQ-SE with QuBan \cite{hanna21} and Fed-SEL \cite{mitra_heterogeneity} for bounded rewards while Figures \ref{fig: Gaussian_round}, \ref{fig: Gaussian_sample} and \ref{fig: Gaussian_bits} compare ICQ-SE with QuBan for unbounded rewards. Finally, Figures \ref{fig: Hardness_round}, \ref{fig: Hardness_sample} and \ref{fig: Hardness_bits} compare the dependence of ICQ-SE and QuBan on the hardness of the underlying instance.  
 } \label{fig: sims}
\end{figure*}

\ifreport

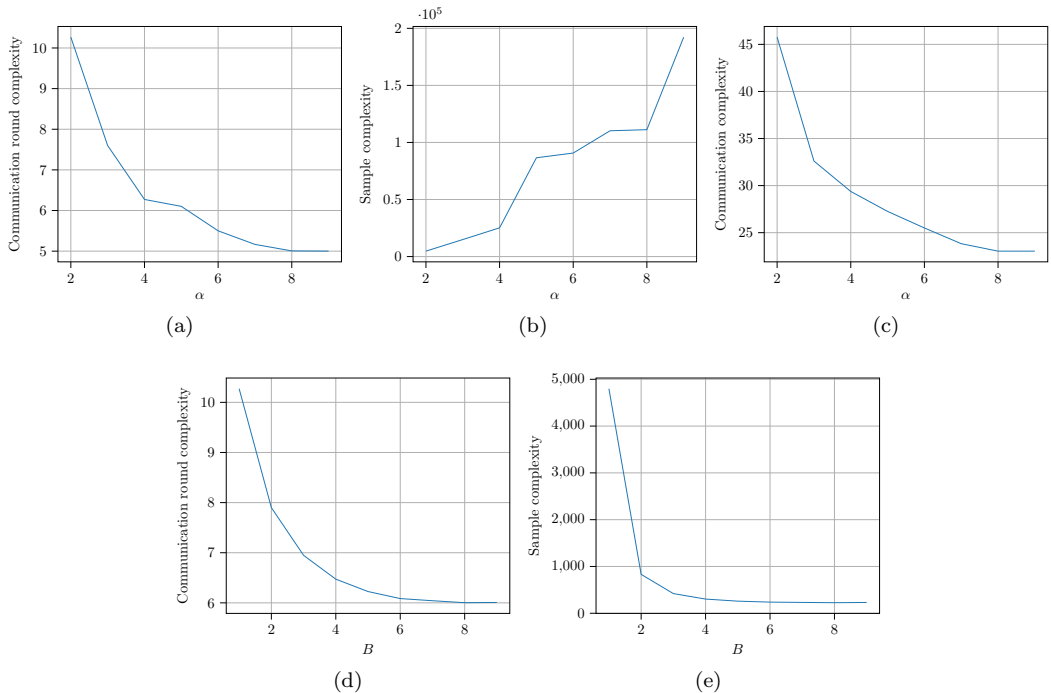
\begin{figure*}[!htbp]
	\centering
\subfloat[\label{fig: QSE_alpha_round}
]{\scalebox{0.55}{
\begin{tikzpicture}

\definecolor{darkgray176}{RGB}{176,176,176}
\definecolor{darkorange25512714}{RGB}{255,127,14}
\definecolor{lightgray204}{RGB}{204,204,204}
\definecolor{steelblue31119180}{RGB}{31,119,180}

\begin{axis}[
tick align=outside,
tick pos=left,
x grid style={darkgray176},
xlabel={\(\displaystyle \alpha\)},
xmajorgrids,
xmin=1.65, xmax=9.35,
xtick style={color=black},
y grid style={darkgray176},
ylabel={Communication round complexity},
ymajorgrids,
ymin=4.7367, ymax=10.5293,
ytick style={color=black}
]
\addplot [semithick, steelblue31119180]
table {%
2 10.266
3 7.598
4 6.272
5 6.102
6 5.5
7 5.166
8 5.006
9 5
};
\end{axis}

\end{tikzpicture}} }
\subfloat[\label{fig: QSE_alpha_sample}
]{\scalebox{0.55}{
\begin{tikzpicture}
	
	\definecolor{darkgray176}{RGB}{176,176,176}
	\definecolor{darkorange25512714}{RGB}{255,127,14}
	\definecolor{lightgray204}{RGB}{204,204,204}
	\definecolor{steelblue31119180}{RGB}{31,119,180}
	
	\begin{axis}[
		tick align=outside,
		tick pos=left,
		x grid style={darkgray176},
		xlabel={\(\displaystyle \alpha\)},
		xmajorgrids,
		xmin=1.65, xmax=9.35,
		xtick style={color=black},
		y grid style={darkgray176},
		ylabel={Sample complexity},
		ymajorgrids,
		ymin=-4624.6524, ymax=201639.1724,
		ytick style={color=black}
		]
		\addplot [semithick, steelblue31119180]
		table {%
			2 4750.976
			3 14902.056
			4 25167.872
			5 86570
			6 90720
			7 110225.108
			8 111198.208
			9 192263.544
		};
	\end{axis}
	
\end{tikzpicture}} }
\subfloat[\label{fig: QSE_alpha_bits}
]{\scalebox{0.55}{
\begin{tikzpicture}
	
	\definecolor{darkgray176}{RGB}{176,176,176}
	\definecolor{darkorange25512714}{RGB}{255,127,14}
	\definecolor{lightgray204}{RGB}{204,204,204}
	\definecolor{steelblue31119180}{RGB}{31,119,180}
	
	\begin{axis}[
		tick align=outside,
		tick pos=left,
		x grid style={darkgray176},
		xlabel={\(\displaystyle \alpha\)},
		xmajorgrids,
		xmin=1.65, xmax=9.35,
		xtick style={color=black},
		y grid style={darkgray176},
		ylabel={Communication complexity},
		ymajorgrids,
		ymin=21.9016, ymax=46.9024,
		ytick style={color=black}
		]
		\addplot [semithick, steelblue31119180]
		table {%
			2 45.766
			3 32.608
			4 29.378
			5 27.258
			6 25.5
			7 23.83
			8 23.038
			9 23.038
		};
	\end{axis}
	
\end{tikzpicture}} }

\subfloat[\label{fig: QSE_B_round}
]{\scalebox{0.55}{
\begin{tikzpicture}

\definecolor{darkgray176}{RGB}{176,176,176}
\definecolor{steelblue31119180}{RGB}{31,119,180}

\begin{axis}[
tick align=outside,
tick pos=left,
x grid style={darkgray176},
xlabel={\(\displaystyle B\)},
xmajorgrids,
xmin=0.6, xmax=9.4,
xtick style={color=black},
y grid style={darkgray176},
ylabel={Communication round complexity},
ymajorgrids,
ymin=5.78976, ymax=10.48544,
ytick style={color=black}
]
\addplot [semithick, steelblue31119180]
table {%
1 10.272
2 7.8996
3 6.9478
4 6.4728
5 6.2258
6 6.0856
7 6.0426
8 6.0032
9 6.008
};
\end{axis}

\end{tikzpicture}} }
\subfloat[\label{fig: QSE_B_sample}
]{\scalebox{0.55}{
\begin{tikzpicture}
	
	\definecolor{darkgray176}{RGB}{176,176,176}
	\definecolor{steelblue31119180}{RGB}{31,119,180}
	
	\begin{axis}[
		tick align=outside,
		tick pos=left,
		x grid style={darkgray176},
		xlabel={\(\displaystyle B\)},
		xmajorgrids,
		xmin=0.6, xmax=9.4,
		xtick style={color=black},
		y grid style={darkgray176},
		ylabel={Sample complexity},
		ymajorgrids,
		ymin=-2.75134000000003, ymax=5027.00614,
		ytick style={color=black}
		]
		\addplot [semithick, steelblue31119180]
		table {%
			1 4798.3808
			2 831.3264
			3 420.2608
			4 303.344
			5 258.8036
			6 238.3352
			7 231.878
			8 225.874
			9 231.1376
		};
	\end{axis}
	
\end{tikzpicture}} }

 \caption{Figures \ref{fig: QSE_alpha_round}, \ref{fig: QSE_alpha_sample} and \ref{fig: QSE_alpha_bits} demonstrate the dependence of the performance of ICQ-SE on $\alpha$. Figures \ref{fig: QSE_B_round} and \ref{fig: QSE_B_sample} demonstrate the dependence on $\beta$.
 } \label{fig: sims_ext}
\end{figure*}
\fi
	\section{Conclusion}
	We propose ICQ, a novel quantization scheme for the distributed best-arm identification problem where the learner does not have access to full-precision rewards, and analyze ICQ-SE, which is the application of ICQ to the Successive Elimination algorithm for this setting. Future lines of work include: (1) using a variable-length and adaptive quantization scheme in each round to reduce the communication complexity; for example, a Lloyd-Max quantizer based on the empirical distribution, (2) characterizing a lower bound on the communication complexity required to ensure a certain sample/round complexity, and (3) developing quantization schemes for the fixed budget variant of the best-arm identification problem.
	
	\bibliographystyle{IEEEtran}  
	\bibliography{main.bib}
	
\ifreport
\appendix \section{Proofs} \ifreport
\begin{proof}[Proof of Lemma \ref{lem: equivalence}]
	We wish to show that for all arms $1 \leq j \leq K$ and any $\delta > 0$,
	\begin{align*}
		\bigcup_{i = 1}^\infty \left\{ |\mut_{j,i} - \mu_j | > U(i,\delta) \right\} \subseteq \bigcup_{i=1}^\infty \left\{ |\muh_{j,i} - \mu_j | > U^\prime(i,\delta) \right\}.
	\end{align*}
	First observe that the sets on the left-hand side are empty when $i = 0$, as 
	\begin{align}
		\left\{ |\mut_{j,0} - \mu_j | >  U(0, \delta) \right\} =  \left\{ |\mut_{j,0} - \mu_j | >  (b-a) \right\} = \emptyset, \label{eqn: U_0_defn}
	\end{align}
	so it is sufficient to show that
	\begin{align*}
		\bigcup_{i = 0}^\infty \left\{ |\mut_{j,i} - \mu_j | > U(i,\delta) \right\} \subseteq \bigcup_{i=1}^\infty \left\{ |\muh_{j,i} - \mu_j | > U^\prime(i,\delta) \right\}.
	\end{align*}
	We do so by instead proving the following equivalent statement --- for each $0 \leq i < \infty$,
	\begin{equation*}
		\left\{ |\mut_{j,i} - \mu_j | > U(i,\delta) \right\} \subseteq \bigcup_{k=1}^\infty \left\{ |\muh_{j,k} - \mu_j | > U^\prime(k,\delta) \right\}.		
	\end{equation*}
	The proof is by induction over $i$. The base case, $i=0$, holds trivially as the left-hand side is $\emptyset$.
	Assume now that this property holds for index $i-1 \geq 0$, i.e., 
	\begin{equation*}
		\left\{ |\mut_{j,i-1} - \mu_j | > U(i-1,\delta) \right\} \subseteq \bigcup_{k=1}^\infty \left\{ |\muh_{j,k} - \mu_j | > U^\prime(k,\delta) \right\}.
	\end{equation*}
	Using \eqref{eqn: triangle} and \eqref{eqn: U_breakup} respectively, we have
	\begin{align*}
		&\left\{ |\mut_{j,i} - \mu_j | > U(i,\delta) \right\} \\
		\subseteq &\left\{ |\mut_{j,i} - \muh_{j,i} | + |\muh_{j,i} - \mu_j | > U(i,\delta) \right\},\\
		\subseteq &\left\{ |\muh_{j,i} - \mu_j | > U^\prime(i,\delta) \right\} \cup\\& \left\{ |\mut_{j,i} - \muh_{j,i} | > \frac{1}{2^{B}}U^\prime(i,\delta) + \frac{1}{2^{B}} U(i-1, \delta) \right\},
	\end{align*}
	where the second step follows as $U(i,\delta) = U^\prime(i, \delta) + \frac{1}{2^{B}}U^\prime(i,\delta) + \frac{1}{2^{B}} U(i-1, \delta)$
	and $a + b > x + y$ requires at least one of $a > x$ or $b > y$.
	
	We now bound the second term in the union on the right-hand side. The estimated mean is encoded using $\enc$ on the interval $[\lcb(j,i-1,\delta)-U^\prime(i,\delta),\ucb(j,i-1,\delta)+U^\prime(i,\delta)]$. As long as $\muh_{j,i}$ lies in this interval, the learner can decode $\mut_{j,i}$ to an error within $\frac{1}{2} \cdot \frac{1}{2^{B}}$ times the width of this interval, given by $2 \left (U^\prime(i,\delta) + U(i-1, \delta) \right )$. Hence we have
	\begin{align*}
		&\left\{ |\mut_{j,i} - \muh_{j,i} | > \frac{1}{2^{B}}U^\prime(i,\delta) + \frac{1}{2^{B}} U(i-1, \delta) \right\} \\
		\subseteq &\left\{ |\mut_{j,i-1} - \muh_{j,i} | >  U^\prime(i,\delta) + U(i-1, \delta) \right\} \\
		\subseteq &\left\{ |\mut_{j,i-1} - \mu_{j} | +  |\muh_{j,i} - \mu_{j} | >  U^\prime(i,\delta) + U(i-1, \delta) \right\} \\
		\subseteq &\left\{ |\mut_{j,i-1} - \mu_j | >  U(i-1, \delta) \right\} \cup \left\{ |\muh_{j,i} - \mu_j | > U^\prime(i,\delta)  \right\} \\
		\subseteq &\bigcup_{k=1}^\infty \left\{ |\muh_{j,k} - \mu_j | > U^\prime(k,\delta) \right\}, \numberthis \label{eqn: error_event_final}
	\end{align*}
	where the last step follows from the induction hypothesis. 
\end{proof}
\fi

\begin{proof}[Proof of Lemma \ref{lem: delta}] This follows easily by a union bound, as
	\begin{align*}
		\prob(\mathcal{E})&= \prob \left( \bigcup_{j =1}^K\bigcup_{i=1}^\infty  \left\{ |\mut_{j,i} - \mu_j | > U(i,\delta) \right\}  \right) \\
		&\stackrel{(a)}\leq \prob \left( \bigcup_{j =1}^K\bigcup_{i=1}^\infty  \left\{ |\muh_{j,i} - \mu_j | > U^\prime(i,\delta) \right\}  \right) \\
         &\leq \prob \left( \bigcup_{j =1}^K\bigcup_{i=1}^\infty  \left\{ |\muh_{j,i} - \mu_j | > \sigma \sqrt{\frac{2 \log( 4Kt_i^2/\delta)}{t_i}} \right\}  \right) \\
         &\leq \sum_{j =1}^K\sum_{i=1}^\infty \prob \left( \left|\frac{1}{t_i}\sum_{k=1}^{t_i} r_{j, k} - \mu_j \right| > \sigma \sqrt{\frac{2 \log( 4Kt_i^2/\delta)}{t_i}} \right) \\ 
         &\leq \sum_{j =1}^K\sum_{i=1}^\infty \prob \left(\left|\frac{1}{t_i}\sum_{k=1}^{t_i} r_{j, k} - \mu_j \right| > \sigma \sqrt{\frac{2 \log( 4Kt_i^2/\delta)}{t_i}} \right) \\
       &\stackrel{(b)}\leq  \sum_{j =1}^K\sum_{i=1}^\infty 2\exp\left(-\log( 4Kt_i^2/\delta)\right) =  \sum_{j =1}^K\sum_{i=1}^\infty 2 \left(\frac{\delta}{4Kt_i^2}\right)
        \\& \stackrel{(c)}<  \sum_{j =1}^K \frac{\delta}{K} = \delta,
	\end{align*}
	where (a) follows from Lemma \ref{lem: equivalence}, (b) from the definition of $\sigma^2$-subgaussianity, and (c) from $\sum_{i=1}^\infty \frac{1}{t_i^2}\leq \sum_{i=1}^\infty \frac{1}{i^2} < 2$.
\end{proof}
\begin{proof}[Proof of Theorem \ref{thm: best_arm}] For arm $k$ to be dropped from the active set $S$ at round $i$, there must exist an arm $l$ such that
	\begin{align*}
		\lcb(l, i, \delta) &\geq \ucb(k, i, \delta)\\
		\iff \mut_{l,i} - U(i, \delta)  &\geq \mut_{k,i} + U(i,\delta).
	\end{align*}
	On the set $\mathcal{E}^c$ (with $\mathcal{E}$ as defined in Lemma \ref{lem: delta}), we have for all arms $j$ at round $i$, $	|\mut_{j,i} - \mu_j | \leq U(i,\delta)$. In particular, for arms $k$ and $l$, we have
	\begin{equation*}
		\mu_l + U(i, \delta) \geq \mut_{l,i}\text{ and } \mut_{k,i} \geq  \mu_k - U(i, \delta).
	\end{equation*}
	Combining the above inequalities, we have that arm $k$ can be dropped from $S$ only if there exists an arm $l$ such that
	\begin{align*}
		\mu_l + U(i, \delta) - U(i, \delta)  &\geq \mu_k - U(i, \delta) + U(i,\delta)\\
		\iff \mu_l &\geq \mu_k.
	\end{align*}
	We thus have that the best arm will always remain in the active set under $\mathcal{E}^c$, which completes the proof.
\end{proof}

\begin{proof}[Proof of Lemma \ref{lem: U_ub}]
	Let $i \geq 1$. From the definition of $U(i,\delta)$ and using the result that for $j \leq i$, $t_j \leq t_i$, it follows that
	\begin{align*}
		U(i,\delta) &=U^\prime(i, \delta) + \frac{1}{2^{B}}\left[U^\prime(i,\delta) +  U(i-1, \delta)\right]\\
		&= \left(1+\frac{1}{2^{B}}\right) \sum_{j=1}^{i} \left (\frac{1}{2^{B}} \right )^{i-j} U^\prime(j,\delta) {\ifreport\else\\& \hspace{2em}\fi}+ \left(\frac{1}{2^{B}}\right)^{i}(b-a) \\
		&=  \left(1+\frac{1}{2^{B}}\right) {\ifreport\else\times \\ &\hspace{2em}\fi} \sum_{j=1}^{i} \left (\frac{1}{2^{B}} \right )^{i-j} \sigma \sqrt{\frac{2 \log \left(4 K t_j^2 / \delta \right)}{t_j}} {\ifreport\else\\& \hspace{2em}\fi}+ \left(\frac{1}{2^{B}}\right)^{i}(b-a) \\
		&\leq \sigma \left(1+\frac{1}{2^{B}}\right)  \sqrt{2 \log \left(4 K t_i^2 / \delta \right)} {\ifreport\else\times \\& \hspace{2em}\fi} \sum_{j=1}^{i} \left (\frac{1}{2^{B}} \right )^{i-j} \frac{1}{\sqrt{t_j}} + \left(\frac{1}{2^{B}}\right)^{i}(b-a).
	\end{align*}
	Now, using $t_i = \alpha^i$ gives us that
	\begin{align*}
		U(i,\delta) &\leq \sigma \left(1+\frac{1}{2^{B}}\right)  \sqrt{2 \log \left(4 K t_i^2 / \delta \right)} {\ifreport\else\times \\& \hspace{2em}\fi} \left (\frac{1}{2^{B}} \right )^{i} \sum_{j=1}^{i} \left (\frac{2^{B} }{\sqrt{\alpha}} \right )^{j} {\ifreport\else\\& \hspace{2em}\fi} + \left(\frac{1}{2^{B}}\right)^{i}(b-a).
	\end{align*}
	As $\sqrt{\alpha} < 2^B$, it follows that
	\begin{align*}
		U(i,\delta) &\leq \sigma \left(1+\frac{1}{2^{B}}\right)  \sqrt{2 \log \left(4 K t_i^2 / \delta \right)} {\ifreport\else\times\\& \hspace{2em}\fi} \left (\frac{1}{2^{B}} \right )^{i} \frac{2^{B} }{ \sqrt{\alpha}} \frac{\left(\frac{2^{B}}{ \sqrt{\alpha}}\right)^i-1}{\frac{2^{B}}{ \sqrt{\alpha}}-1}{\ifreport\else\\& \hspace{2em}\fi} + \left(\frac{1}{2^{B}}\right)^{i}(b-a) \\
		&\leq \sigma \left(1+\frac{1}{2^{B}}\right)  \sqrt{2 \log \left(4 K t_i^2 / \delta \right)} {\ifreport\else\times\\& \hspace{2em}\fi} \left (\frac{1}{2^{B}} \right )^{i} \frac{2^{B} }{ \sqrt{\alpha}} \frac{\left(\frac{2^{B}}{ \sqrt{\alpha}}\right)^i}{\frac{2^{B}}{ \sqrt{\alpha}}-1}{\ifreport\else\\& \hspace{2em}\fi} \hspace{2em} + \left(\frac{1}{2^{B}}\right)^{i}(b-a) \\
		&= \sigma \left(1+\frac{1}{2^{B}}\right)  \sqrt{2 \log \left(4 K t_i^2 / \delta \right)} {\ifreport\else\times\\& \hspace{2em}\fi}  \frac{2^{B}}{\sqrt{t_i}} \frac{1}{{2^{B}} - { \sqrt{\alpha}}}{\ifreport\else\\& \hspace{2em}\fi} + \left(\frac{1}{2^{B}}\right)^{i}(b-a) \\
		&= \left(1+\frac{1}{2^{B}}\right) \frac{2^B }{{2^{B}} - { \sqrt{\alpha}}} U'(i,\delta) {\ifreport\else\\& \hspace{2em}\fi} + \left(\frac{1}{2^{B}}\right)^{i}(b-a). 
	\end{align*}
	To show that
	\begin{align*}
		U(i,\delta) \leq 2\left(1+\frac{1}{2^{B}}\right) \frac{2^B }{{2^{B}} - { \sqrt{\alpha}}} U'(i,\delta),
	\end{align*}
	it suffices to show that 
	\begin{align*}
		\left(\frac{1}{2^{B}}\right)^{i}(b-a) \leq	\left(1+\frac{1}{2^{B}}\right) \frac{2^B}{{2^{B}} - {\sqrt{\alpha}}} U'(i,\delta).
	\end{align*}
	As 
	\begin{align*}
		\left(1+\frac{1}{2^{B}}\right) \frac{2^B }{{2^{B}} - { \sqrt{\alpha}}} \geq 1,
	\end{align*}
	for $i \geq 1$, and the denominator of $U'(i,\delta)$ satisfies the following result:
	\begin{align*}
		\sqrt{t_i} = \left (\sqrt{\alpha} \right )^i < \left ({2^B} \right )^i,
	\end{align*}  
	it suffices to show that 
	$\sigma \sqrt{2 \log \left(4Kt_i^2 / \delta \right)} \geq (b-a)$. This will hold as long as
	\begin{align*}
		i\geq \log_{\alpha} \left (\sqrt{\frac{\delta}{4K}\exp \left (\frac{(b-a)^2}{2 \sigma^2} \right )} \right ).
	\end{align*}
	For a small enough $\delta$, this will be satisfied trivially. If
	\begin{align*}
		\delta < 4K \alpha^2 \exp \left(-(b-a)^2/(2\sigma^2)\right),
	\end{align*}
	then we have that for all $i \geq 1$,
	\begin{align*}
		U(i,\delta) \leq 2\left(1+\frac{1}{2^{B}}\right) \frac{2^B }{{2^{B}} - { \sqrt{\alpha}}} U'(i,\delta).
	\end{align*}
\end{proof}

\begin{proof}[Proof of Theorem \ref{thm: sample_complexity}]
	Assume w.l.o.g.\ that the optimal arm is arm 1. \ifreport Similar to the proof of Theorem \ref{thm: best_arm}, a \else A \fi suboptimal arm $j$ will be removed if
	\begin{align}
		\lcb(1,i,\delta) > \ucb(j,i,\delta). \label{eqn: event_arm_kicked}
	\end{align}
	On $\mathcal{E}^{c}$, we have that
	\begin{align*}
		\mut_{1,i} &\geq \mu_1 - U(i,\delta), \\
		\mut_{j,i} &\leq \mu_j + U(i,\delta).
	\end{align*}
	The equation \eqref{eqn: event_arm_kicked} is guaranteed to occur if
	\begin{align*}
		\mu_1 - 2U(i,\delta) \geq \mu_j + 2U(i,\delta).
	\end{align*}
	Define $\Delta_j = \mu_1 - \mu_j$ to be the suboptimality gap of arm $j$ for $j \in \{2,\ldots,K\}$. The sample complexity of arm $j$ is the number of samples needed to remove the suboptimal arm $j$ from $S$. Define $T_j$ to be the smallest value of $t_i$ that satisfies 
	\begin{align*}
		\Delta_j \geq 4U(i,\delta).
	\end{align*}
	Then $\alpha  T_j$ is an upper bound on the sample complexity of arm $j$. The factor of $\alpha$ is needed here as we communicate exponentially sparsely only.
	For $t_i = \alpha^i$, we have, from Lemma \ref{lem: U_ub},
	\begin{align*}
		U(i,\delta) \leq 2c\, U^\prime(i, \delta),
	\end{align*}
	where 
	\begin{align*}
		c = \left(1 + \frac{1}{2^{B}}\right) \frac{2^{B}}{2^{B} - \sqrt{\alpha}}.
	\end{align*}
	If $T_j'$ is the smallest value of $t_i$ satisfying
	\begin{align*}
		U'(i,\delta) \leq \frac{\Delta_j}{8c},
	\end{align*}
	then $\alpha T_j'$ is an upper bound on $ \alpha T_j$ and thus an upper bound on the sample complexity of suboptimal arm $j$. Define $a=\frac{\Delta_j^2}{256c^2\sigma^2}$ and $b=\ln(4K/\delta)/2$. Then the smallest value of $i$ that is a solution to
	\begin{align*}
		U'(i,\delta) = \frac{\Delta_j}{8c}
	\end{align*}
	satisfies
	\begin{align*}
		-a t_i e^{-at_i} = - ae^{-b}.
	\end{align*}
	Let $\delta$ be small enough such that
	\begin{align*}
		ae^{-b} = \frac{\Delta_j^2}{256c^2 \sigma^2} e^{-\ln(4K/\delta)/2} = \frac{\Delta_j^2}{256c^2 \sigma^2} \sqrt{\frac{\delta}{4K}} < \frac{1}{e}. 
	\end{align*}
	Then the solution to this equation would be $\lceil \frac{-1}{a} W_{-1}(-ae^{-b})\rceil$, where $W_{-1}(y), \frac{-1}{e} <   y<0$ is the smallest value of $x<0$ satisfying $x e^x = y$. There are in fact two solutions, $W_0(y)$ and $W_{-1}(y)$, out of which $W_{-1}(y)$ is smaller (more negative) and belongs to $\left (-\infty,-1 \right )$. Using Theorem 3.1 of \cite{alzah_salem}, we have that $W_{-1}(y) > \frac{e}{e-1} \ln(-y)$. It follows that
	\begin{align*}
		\alpha T_j &\leq \frac{-\alpha}{a} \frac{e}{e-1} \ln(a e^{-b}) + 1 \\
		&= \frac{\alpha e}{e-1} \frac{256c^2 \sigma^2}{\Delta_j^2} (b-\ln a) + 1 \\
		&\leq \frac{410 \alpha c^2 \sigma^2}{\Delta_j^2} \ln \left(\frac{256c^2 \sigma^2 \sqrt{4K\delta}}{\Delta_j^2}\right) + 1.
	\end{align*}

\end{proof}
\fi
	
\end{document}